\documentclass[10pt,letterpaper]{article}
\usepackage[letterpaper,textwidth=5.5in,textheight=9in]{geometry}
\usepackage{times,natbib,microtype}
\usepackage[hyphens]{url}
\usepackage{float}
\usepackage{placeins}
\usepackage[colorlinks=true,linkcolor=blue!40!black,citecolor=blue!40!black,urlcolor=blue!40!black]{hyperref}
\usepackage{graphicx}
\usepackage{amsmath,amssymb,amsthm,mathtools,bm}
\usepackage{booktabs}
\usepackage{array}
\usepackage{multirow}
\usepackage{tikz}
\usetikzlibrary{arrows.meta,positioning,decorations.pathreplacing}
\definecolor{basegray}{HTML}{777777}
\definecolor{depthtwo}{HTML}{82ADD3}
\definecolor{depththree}{HTML}{3379B5}
\definecolor{depthfour}{HTML}{133F70}
\definecolor{regionone}{HTML}{F8E6C9}
\definecolor{regiontwo}{HTML}{DDF0EB}
\definecolor{regionthree}{HTML}{EAE6F5}
\newcommand{\regiontag}[2]{{\setlength{\fboxsep}{2pt}\colorbox{#1}{\strut\hspace{2pt}#2\hspace{2pt}}}}
\newtheorem{proposition}{Proposition}
\newtheorem{lemma}{Lemma}

\newcommand{\E}{\mathbb E}
\newcommand{\R}{\mathbb R}

\newcommand{\sg}{\operatorname{sg}}
\DeclareMathOperator*{\argmin}{arg\,min}
\newcommand{\LocoRegionJointRows}{%
TD3+BC & 66.6 & 41.9 & 19.3 & \textbf{0.0} & 27.8 & 66.7 & 53.7\\
I-MPI-2 & 69.0 & 63.8 & 24.9 & 1.9 & 8.3 & 58.3 & 41.7\\
I-MPI-3 & 68.7 & 71.2 & 33.2 & 1.9 & 5.6 & 50.0 & 35.2\\
I-MPI-4 & \textbf{69.1} & \textbf{79.9} & \textbf{41.4} & 1.9 & \textbf{0.0} & \textbf{37.5} & \textbf{25.0}\\
}

\newcommand{\LocoCollapseRows}{%
0 & 3.7 & 2.8 & 2.8 & 1.9\\
10 & 47.2 & 40.7 & 33.3 & 20.4\\
20 & 53.7 & 41.7 & 35.2 & 25.0\\
30 & 56.5 & 46.3 & 37.0 & 28.7\\
40 & 59.3 & 51.9 & 38.0 & 33.3\\
}

\newcommand{\LocoHighGain}{8.3}
\newcommand{\LocoHighCI}{[2.9,14.3]}
\newcommand{\LocoRescueShare}{65\%}

\newcommand{\IETDHigh}{25.4}
\newcommand{\IEITwoHigh}{41.2}
\newcommand{\IEIThreeHigh}{51.9}
\newcommand{\IEETwoHigh}{38.4}
\newcommand{\IEEThreeHigh}{46.4}

\title{Is One Step Enough for\\Offline Policy Improvement?}
\author{Soohyun Choi$^{*}$ \qquad Seonvin Cho$^{*}$ \qquad Songnam Hong$^{\dagger}$\\[4pt]
Information and Intelligence Systems Laboratory (IISL)\\
Department of Electronic Engineering, Hanyang University\\
Seoul, Republic of Korea\\[3pt]
\texttt{petersun0221@hanyang.ac.kr} \quad \texttt{seonbin0319@hanyang.ac.kr}\\
\texttt{snhong@hanyang.ac.kr}\\[5pt]
{\small $^{*}$Equal contribution. $^{\dagger}$Corresponding author.}}
\date{September 2026}
\hypersetup{pdftitle={Is One Step Enough for Offline Policy Improvement?},
pdfauthor={Soohyun Choi, Seonvin Cho, Songnam Hong}}
\begin{document}
\maketitle

\begin{abstract}
Behavior regularization in offline reinforcement learning limits the
exploitation of critic errors, but strong anchoring can also restrict
policy improvement. We study how policy improvement is composed through
multi-step proximal policy improvement (MPI), which re-centers each
proximal objective on the preceding policy. We parameterize the procedure
by a nominal total horizon $T$ and $K$ stages with local horizon $T/K$,
distinguishing subdivision at a fixed total horizon from additional
refinement at a common local horizon. Our analysis shows that sequential
re-centering can reach endpoints unavailable to any single proximal step
and characterizes how subdivision reduces the leading local discretization
error of ideal updates under a fixed critic. We consider TD3+BC and
IQL-based policy extraction to examine how improvement composition
interacts with actor objectives and policy geometry. TD3+BC experiments
on D4RL locomotion suggest that subdivision can broaden the range of
useful total horizons, while adding refinement stages at a fixed small
local horizon can improve return. The results identify improvement
composition as a design choice alongside regularization strength, with
distinct effects from horizon subdivision and additional policy extraction.
\end{abstract}

\section{Introduction}
\label{sec:introduction}

Offline reinforcement learning improves a policy using a fixed dataset
and learned value estimates. Behavior regularization limits the use of
actions on which these estimates may be unreliable
\citep{fujimoto2019offpolicy,kumar2019bear,wu2019brac}. Its coefficient
sets the trade-off between value maximization and proximity to dataset
actions. Strong anchoring can leave useful improvements unrealized;
weak anchoring can lead the policy into regions where the critic
extrapolates poorly.

Regularization strength alone does not specify how improvement is
composed. In TD3+BC, the actor trades critic value against distance
from dataset actions throughout training. We study
\emph{multi-step proximal policy improvement} (MPI): its first
deterministic actor uses the same data anchor, and each later actor
is anchored to the preceding actor. These are separately trained
actors; later actors use their freshly updated predecessor as the
reference during training.

This construction asks whether a given improvement budget works better
as one actor step or as several smaller steps. We also examine longer
chains built by adding steps of the same size. Under an ideal fixed
critic, a smooth strongly convex example shows that two stages can
reach an endpoint unavailable to any single proximal step. For smooth
ideal updates and a small total budget, subdivision reduces the
leading error relative to the corresponding policy flow.

We test two deterministic actor updates and Gaussian policy extraction
with an IQL critic.
On nine D4RL locomotion tasks, deeper TD3+BC chains retain high
returns over a wider range of tested total horizons. A matched control that
anchors later actors to the first actor finds both gains and losses
from re-centering across tasks. The Gaussian experiments also show
depth-dependent gains and failures across extraction rules.

\section{Background}
\label{sec:background}

\subsection{Offline reinforcement learning}
Consider a discounted Markov decision process
\(\mathcal M=\langle\mathcal S,\mathcal A,P,r,\gamma\rangle\).
Offline reinforcement learning seeks a policy maximizing
\[
J(\pi)=\E_\pi\!\left[\sum_{t\ge0}\gamma^t r(s_t,a_t)\right]
\]
using only a fixed dataset
\(\mathcal D=\{(s_i,a_i,r_i,s_i')\}_{i=1}^N\), without further
environment interaction \citep{levine2020offline}. Because the learned
policy may select actions that are weakly represented in the dataset,
its value estimates can rely on extrapolation; when such estimates enter
Bellman backups, the resulting error can propagate through learning
\citep{fujimoto2019offpolicy,kumar2019bear}.

A common strategy is therefore to regularize policy improvement toward
data-supported actions. Behavior regularization can be imposed directly
on the actor, while conservative value-learning methods instead modify
the critic to discourage unsupported actions
\citep{wu2019brac,kumar2020cql}. In this work, we keep the value-learning
procedure of the base algorithm and study the actor-side improvement
step.

\subsection{Actor-side policy improvement}
Let \(\widehat Q\) denote a critic learned from \(\mathcal D\), and
let \(\rho_{\mathcal D}\) be the dataset state marginal. An actor
update converts the learned value information into a policy while
controlling how far the policy moves from the data-supported region.
The particular extraction rule can materially affect offline RL
performance even when the value-learning procedure is fixed
\citep{park2024bottleneck}.

We consider two extraction patterns used later in the paper.
TD3+BC directly maximizes a learned critic while penalizing squared
deviation from dataset actions \citep{fujimoto2021minimalist}. IQL
learns \(Q\) and \(V\) from dataset transitions and extracts a
stochastic policy by advantage-weighted regression, with
\(A(s,a)=Q(s,a)-V(s)\) \citep{kostrikov2022iql}. We study how
locally regularized actor improvement composes across stages while
retaining each base algorithm's value-learning procedure.

\subsection{Proximal updates and policy geometry}
A proximal update balances optimization of an objective against movement
from a reference point. Given an objective \(F\), a reference \(\nu\),
a metric \(d\), and a step size \(h>0\), a metric proximal step takes
the form
\[
x^+\in\argmin_x
\left\{
F(x)+\frac{1}{2h}d^2(x,\nu)
\right\}.
\]
Smaller \(h\) penalizes movement from \(\nu\) more strongly.
In Euclidean coordinates, a differentiable energy generates a gradient
flow with two familiar time discretizations:
\begin{equation}
\dot x_t=-\nabla F(x_t),\qquad
x_{k+1}^{\rm I}=x_k-h\nabla F(x_{k+1}^{\rm I}),\qquad
x_{k+1}^{\rm E}=x_k-h\nabla F(x_k).
\label{eq:gf_discretizations}
\end{equation}
For Euclidean \(d\), the proximal problem above is the \emph{implicit}
(backward Euler) step: its gradient is evaluated at the new point. The \emph{explicit}
(forward Euler) step evaluates it at the current reference. Repeating
metric proximal steps gives the minimizing-movement construction for
gradient flows \citep{parikh2014proximal,ambrosio2005gradient}; the
Wasserstein JKO scheme is an example \citep{jordan1998variational}.
The implicit and explicit rules use the same vector field but evaluate
it at different actions; this distinction defines the deterministic
actor updates in Section~\ref{sec:det_method}.
The metric fixes policy movement: KL/Fisher geometry yields local
trust-region movement \citep{schulman2015trust}, and Wasserstein
geometry uses transport cost \citep{zhang2018policy,terpin2022trust,pfau2025wpo}.

\section{Single-step proximal improvement}
\label{sec:prox_view}

Hold a learned critic
\(\widehat Q:\mathcal S\times\mathcal A\to\mathbb R\) fixed. We define
the critic-induced policy energy
\begin{equation}
\mathcal E_{\widehat Q}(\pi)
=-\E_{s\sim\rho_{\mathcal D},\,a\sim\pi(\cdot\mid s)}
\widehat Q(s,a).
\label{eq:policy_energy}
\end{equation}
Minimizing this energy corresponds to maximizing the learned critic over
dataset states. For a state distribution \(\rho\) and a statewise policy metric \(d\),
we lift \(d\) to the policy space by
\begin{equation}
\mathsf d_{\rho}^2(\pi,\nu)
:=
\E_{s\sim\rho}
d^2\!\left(\pi(\cdot\mid s),\nu(\cdot\mid s)\right).
\label{eq:policy_metric}
\end{equation}
The statewise metric specifies how two action distributions are compared
at a fixed state, while \(\mathsf d_\rho\) aggregates that movement
over the states relevant to the actor objective.

Given a reference policy \(\nu\), a \emph{single proximal improvement}
(SPI) stage with coefficient \(h>0\) is
\begin{equation}
\mathcal P_h^{\widehat Q}(\nu)
\in\argmin_{\pi\in\Pi}
\left\{
\mathcal E_{\widehat Q}(\pi)
+\frac{1}{2h}\mathsf d_{\rho_{\mathcal D}}^2(\pi,\nu)
\right\}.
\label{eq:prox_operator}
\end{equation}
Thus \(\mathcal P_h^{\widehat Q}\) takes a reference policy and returns a
policy that trades critic improvement against displacement from that
reference. The admissible policy class \(\Pi\) and the geometry determine
the operator. We assume a minimizer exists when discussing exact
proximal maps.

\subsection{TD3+BC as a Wasserstein proximal objective}
For deterministic policies, the Wasserstein--2 distance between Dirac
measures reduces exactly to Euclidean action distance,
\[
W_2^2(\delta_{\mu(s)},\delta_{\nu(s)})
=\|\mu(s)-\nu(s)\|_2^2.
\]
We therefore use the action-mean normalization
\begin{equation}
\mathsf d_{\rho}^2(\mu,\nu)
=\E_{s\sim\rho}\frac{\|\mu(s)-\nu(s)\|_2^2}{n_a},
\label{eq:mean_metric}
\end{equation}
where \(n_a\) is the action dimension. TD3+BC optimizes
\begin{equation}
\mathcal L_{\rm actor}(\mu)
=-\alpha\,\E_s\bar Q(s,\mu(s))
+\E_{(s,a)\sim\mathcal D}
\frac{\|\mu(s)-a\|_2^2}{n_a},
\label{eq:td3bc_background}
\end{equation}
with \(\bar Q=Q_1/C\) and a positive detached normalization \(C\)
\citep{fujimoto2021minimalist}. Averaging the squared cloning term
over dataset actions leaves a distance to their conditional mean,
plus a variance term independent of the actor.

\begin{lemma}[Behavior anchoring as a proximal objective]
\label{prop:prox_equiv}
Assume a fixed critic, a fixed positive scale \(C\), \(\alpha>0\),
finite objective expectations, and finite dataset-action second moments.
On a closed convex action space, the TD3+BC objective has the same
minimizers, when they exist, as Eq.~\eqref{eq:prox_operator} with
the metric in Eq.~\eqref{eq:mean_metric}, deterministic reference
\(\nu(s)=\E_{\mathcal D}[a\mid s]\), energy \(\mathcal E_{\bar Q}\),
and \(h=\alpha/2\).
\end{lemma}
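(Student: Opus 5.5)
The plan is to rewrite the TD3+BC loss as a positive multiple of the proximal objective plus a constant that does not depend on the actor $\mu$. Both operations preserve the set of minimizers.

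\textbf{Step 1: the energy term.} The first term of Eq.~\eqref{eq:td3bc_background} is
\[
-\alpha\,\E_s\bar Q(s,\mu(s)).
\]
With $s\sim\rho_{\mathcal D}$ and $\pi=\delta_{\mu(s)}$, Eq.~\eqref{eq:policy_energy} identifies this expectation as $\alpha\,\mathcal E_{\bar Q}(\mu)$. This uses only that $C$ is fixed and positive, so $\bar Q$ is a fixed function.

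\textbf{Step 2: the cloning term.} Write $\nu(s)=\E_{\mathcal D}[a\mid s]$; this is finite because dataset actions have finite second moments. Expanding,
\[
\|\mu(s)-a\|^2=\|\mu(s)-\nu(s)\|^2+2\langle \mu(s)-\nu(s),\,\nu(s)-a\rangle+\|\nu(s)-a\|^2 .
\]
Take the conditional expectation over $a$ given $s$. The cross term vanishes, since $\mu(s)-\nu(s)$ is $s$-measurable and $\E[\nu(s)-a\mid s]=0$. This leaves
\[
\E_{(s,a)\sim\mathcal D}\frac{\|\mu(s)-a\|^2}{n_a}
=\mathsf d^2_{\rho_{\mathcal D}}(\mu,\nu)+V,
\]
where $V=\E\|a-\nu(s)\|^2/n_a<\infty$ is independent of $\mu$.

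\textbf{Step 3: combine and rescale.} Steps~1 and~2 give
\[
\mathcal L_{\rm actor}(\mu)=\alpha\Big[\mathcal E_{\bar Q}(\mu)+\tfrac{1}{\alpha}\mathsf d^2_{\rho_{\mathcal D}}(\mu,\nu)\Big]+V .
\]
Setting $\tfrac{1}{2h}=\tfrac{1}{\alpha}$ gives $h=\alpha/2$. Because $\alpha>0$ and $V$ is finite and constant, $\mathcal L_{\rm actor}$ and the objective in Eq.~\eqref{eq:prox_operator} differ by a positive affine map. Their minimizers over the class of deterministic policies valued in the closed convex action space therefore coincide whenever either set is nonempty.

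\textbf{Main obstacle: integrability and the reference action.} The delicate step is the cross-term cancellation in Step~2. Each term must have finite expectation for the decomposition to be legitimate, which follows from the finite-objective and finite-second-moment assumptions together with Cauchy--Schwarz. The second point to check is that $\nu(s)$ lies in the action space. This is where closed convexity is used: a conditional mean of actions in a closed convex set stays in that set. As a result, the reference $\nu$ is itself an admissible deterministic policy, so the proximal step is well posed with this reference.
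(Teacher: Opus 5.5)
Your proposal is correct and follows essentially the same route as the paper. The conditional bias--variance identity turns the cloning term into $\mathsf d^2_{\rho_{\mathcal D}}(\mu,\nu)$ plus an actor-independent variance, and dividing by $\alpha>0$ gives the proximal objective with $h=\alpha/2$. Your added checks on integrability and on the conditional mean lying in the closed convex action space match the paper's own remarks that the anchor is admissible and that the minimizer set is assumed nonempty.
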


\subsection{Advantage weighting and local Fisher--Rao geometry}
\label{sec:awr_view}
For stochastic policies, the connection is different. Let
\(A(s,a)=Q(s,a)-V(s)\) and consider the KL-regularized distributional
improvement problem from a reference \(\nu(\cdot\mid s)\):
\[
\max_{\pi(\cdot\mid s)}
\left\{
\E_{a\sim\pi(\cdot\mid s)}A(s,a)
-\frac{1}{h}
D_{\mathrm{KL}}\!\left(
\pi(\cdot\mid s)\,\|\,\nu(\cdot\mid s)
\right)
\right\}.
\]
The unrestricted optimizer is proportional to
\(\nu(a\mid s)e^{hA(s,a)}\), motivating advantage-weighted
extraction in AWAC and IQL \citep{nair2021awac,kostrikov2022iql}. Since
\(D_{\mathrm{KL}}\) has second-order term
\(\tfrac12 d_{\rm FR}^2\), the distributional objective is locally
related to a Fisher--Rao proximal step
(Appendix~\ref{app:awr_connection}).

\section{Multi-step proximal policy improvement}
\label{sec:method}

With a fixed critic, initial reference \(\pi_0\), and step sizes
\(h_1,\ldots,h_K\), MPI composes proximal updates:
\begin{equation}
\pi_k\in\mathcal P_{h_k}^{\widehat Q}(\pi_{k-1}),
\qquad k=1,\ldots,K,
\qquad T=\sum_{k=1}^{K}h_k.
\label{eq:mpi_chain}
\end{equation}
For equal steps, \(h=T/K\). Increasing \(K\) at fixed \(T\)
subdivides the nominal horizon; increasing it at fixed \(h\) extends
the total horizon to \(Kh\). These coefficients do not measure actual
action displacement. In practice, we set \(h=T/K\) and train \(K\)
persistent actors in order: the first uses dataset actions or a base
extraction loss, each later actor uses its freshly updated predecessor
as a detached reference, and only \(\pi_K\) is deployed.
Figure~\ref{fig:path} contrasts the ideal single-stage and
re-centered paths.

\begin{figure}[t]
\centering
\begin{tikzpicture}[
  x=1cm,y=1cm,scale=1.1,transform shape,
  box/.style={draw,rounded corners,minimum width=.85cm,minimum height=.5cm,font=\small},
  arr/.style={-{Latex[length=1.8mm]},thick},
  lab/.style={font=\small}]
\node[box] (b1) at (0,1) {\(\pi_0\)};
\node[box] (one) at (6.1,1) {\(\pi^{\rm SPI}\)};
\draw[arr] (b1) -- node[above,lab]{one proximal problem, coefficient \(T\)} (one);
\node[box] (b2) at (0,0) {\(\pi_0\)};
\node[box] (a1) at (1.9,0) {\(\pi_1\)};
\node[box] (a2) at (3.9,0) {\(\cdots\)};
\node[box] (ak) at (6.1,0) {\(\pi_K\)};
\draw[arr] (b2) -- node[above,lab]{\(h\)} (a1);
\draw[arr] (a1) -- node[above,lab]{\(h\)} (a2);
\draw[arr] (a2) -- node[above,lab]{\(h\)} (ak);
\end{tikzpicture}
\caption{Single-step proximal improvement (SPI) at coefficient \(T\)
(top) and \(K\) re-centered MPI steps at \(h=T/K\) (bottom). Each
lower arrow uses the preceding policy as its reference.}
\label{fig:path}
\end{figure}

\subsection{Deterministic policy refinement}
\label{sec:det_method}
The two actor updates approximate the implicit and explicit steps in
Eq.~\eqref{eq:gf_discretizations}: the implicit loss evaluates the
critic at the new actor action, while the explicit update regresses
onto a target formed from the critic gradient at the preceding action.

For \emph{implicit-proximal MPI} (I-MPI), \(\nu_1\) is the
sampled dataset action; for \(k\ge2\), \(\nu_k\) is the detached,
freshly updated predecessor. With
\(\bar Q_k^{\rm I}=\widehat Q/C_k^{\rm I}\) and a positive detached
scale, the minibatch objective is
\begin{equation}
\mathcal L_k^{\rm I}(\mu_k\mid\nu_k)
=-2h\,\E_i\bar Q_k^{\rm I}(s_i,\mu_k(s_i))
+\E_i\frac{\|\mu_k(s_i)-\nu_k(s_i)\|_2^2}{n_a}.
\label{eq:actorloss}
\end{equation}
Here \(\E_i\) denotes the minibatch average. Dividing by \(2h\)
gives the proximal form for later stages. At \(K=1\),
I-MPI is TD3+BC with \(\alpha=2T\).

\emph{Explicit-target MPI} (E-MPI) regresses onto the
projected critic-gradient target
\begin{equation}
a_k^{\rm tar}(s)=\Pi_{\mathcal A}
\left[\nu_k(s)+n_a h\frac{\nabla_a\widehat Q(s,\nu_k(s))}{C_k^{\rm E}}\right].
\label{eq:linearized_target}
\end{equation}
The scale \(C_k^{\rm E}>0\) and the target are detached;
\(\Pi_{\mathcal A}\) projects onto the action space.
Appendix~\ref{app:algorithms} specifies their scales,
update order, and regression loss.

In both implementations, the first actor supplies the target
actions for critic bootstrapping; later actors only affect policy
extraction.
We write I-MPI-\(K\) and E-MPI-\(K\) for chains with \(K\) actors
using the respective updates. For example, I-MPI-4 trains four actors
and evaluates the final actor \(\mu_4\).

\subsection{Gaussian policy refinement}
\label{sec:iql_method}
IQL's value targets use dataset actions and do not depend on the
actor \citep{kostrikov2022iql}. We can therefore vary extraction
while retaining its value-learning procedure. We consider a
diagonal-Gaussian actor with a critic-gradient and behavior-cloning
loss (Gaussian \(Q\)+BC), and advantage-weighted regression (AWR),
as first-stage objectives.

Write \(A=Q-V\) and \(\ell_\pi(s,a)=-\log\pi(a\mid s)\). The
first actors minimize
\begin{equation}
\begin{aligned}
\mathcal L^{\rm QBC}_1&=-\E_{s\sim\rho_{\mathcal D},\,a\sim\pi_1}Q(s,a)
                       +h^{-1}\E_{\mathcal D}\ell_{\pi_1}(s,a),\\
\mathcal L^{\rm AWR}_1&=\E_{\mathcal D}
                [e^{hA(s,a)}\ell_{\pi_1}(s,a)].
\end{aligned}
\label{eq:iql_base}
\end{equation}
Gaussian \(Q\)+BC extends DDPG+BC-style extraction
\citep{park2024bottleneck} by evaluating expected \(Q\) under a
learned-variance Gaussian and using dataset negative log-likelihood.
The first-stage losses are extraction objectives, whereas each later
actor uses a Gaussian proximal loss with its detached, freshly updated
predecessor as reference: \(W_2\) after \(Q\)+BC and Fisher--Rao after
AWR. The first-stage AWR connection to Fisher--Rao is local
(Section~\ref{sec:awr_view}).
Appendix~\ref{app:gaussian_geometry} gives the distances and local
updates; Appendix~\ref{app:iql_implementation} gives the training
details.

\section{Properties of composed updates}
\label{sec:analysis}

\subsection{Endpoints beyond one proximal step}
Changing the coefficient of one proximal problem retains its
reference. A second proximal step uses the first endpoint as its
reference and can change the set of reachable endpoints.

For an energy \(E\) on \(\R^2\), write
\(\mathcal P_h^E(x)=\argmin_y\{E(y)+\|y-x\|_2^2/(2h)\}\) and define
the endpoints of exactly \(K\) steps by
\[
\mathcal R_K(E,x_0)
=\{(\mathcal P_{h_K}^E\circ\cdots\circ\mathcal P_{h_1}^E)(x_0)
:h_1,\ldots,h_K>0\}.
\]
\begin{proposition}[Strict enlargement of reachable endpoints]
\label{prop:noncollapse}
There exist a smooth strongly convex \(E:\R^2\to\R\) and
\(x_0\in\R^2\) such that
\[
\overline{\mathcal R_1(E,x_0)}
\subsetneq\overline{\mathcal R_2(E,x_0)}.
\]
\end{proposition}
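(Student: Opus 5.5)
We will work with a diagonal quadratic energy, for which every proximal map is linear and has a closed form, and compare the two endpoint sets directly. Take
\[
E(x)=\tfrac12\bigl(x_1^2+\lambda x_2^2\bigr),\qquad \lambda>1,\qquad x_0=(1,1).
\]
This \(E\) is smooth and strongly convex with modulus \(1\). The optimality condition \(\nabla E(y)+(y-x)/h=0\) gives the unique minimizer
\[
\mathcal P_h^E(x)=\bigl(x_1/(1+h),\;x_2/(1+\lambda h)\bigr).
\]
Two consequences follow at once. First,
\[
\mathcal R_1(E,x_0)=\{(u(h),v(h)) : h>0\},\qquad u(h)=(1+h)^{-1},\quad v(h)=(1+\lambda h)^{-1}.
\]
Second, since the maps are diagonal, two steps give
\[
\bigl((1+h_1)^{-1}(1+h_2)^{-1},\;(1+\lambda h_1)^{-1}(1+\lambda h_2)^{-1}\bigr).
\]

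The first step is to describe \(\overline{\mathcal R_1(E,x_0)}\) explicitly. The map \(h\mapsto u(h)\) is a bijection from \((0,\infty)\) onto \((0,1)\). Inverting it, \(h=1/u-1\), so \(\mathcal R_1\) is the graph
\[
v=\phi(u):=\bigl(1+\lambda(1/u-1)\bigr)^{-1},\qquad u\in(0,1).
\]
The limits \(h\to0\) and \(h\to\infty\) add only the endpoints \(x_0=(1,1)\) and \((0,0)\). Hence the closure is this graph together with those two points. In particular, any point of the closure whose first coordinate lies in \((0,1)\) must have second coordinate exactly \(\phi(u)\).

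The second step is to exhibit a two-step endpoint off this graph. Since \(\mathcal R_1\subset\mathcal R_2\) (letting \(h_2\to0\) shows \(\overline{\mathcal R_1}\subseteq\overline{\mathcal R_2}\)), it suffices to find one such point. Fix any \(h_1,h_2>0\). Its first coordinate is \(u=1/(1+H)\in(0,1)\) with \(H=h_1+h_2+h_1h_2\). The single step with the same first coordinate therefore uses coefficient \(H\), and its second coordinate is
\[
\bigl(1+\lambda(h_1+h_2)+\lambda h_1h_2\bigr)^{-1}.
\]
The two-step second coordinate is
\[
\bigl(1+\lambda(h_1+h_2)+\lambda^2 h_1h_2\bigr)^{-1}.
\]
Because \(\lambda^2>\lambda\) and \(h_1h_2>0\), the two-step value is strictly smaller. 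So this point does not lie in \(\overline{\mathcal R_1}\), which gives strict inclusion of the closures.

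The only step needing care is the closure description, i.e.\ checking that no limit point of \(\mathcal R_1\) with \(u\in(0,1)\) escapes the graph. This follows from continuity of \(\phi\) on \((0,1)\) and monotonicity of \(u(h)\). One can also add that the two-step set has nonempty interior, since the Jacobian of \((h_1,h_2)\mapsto\) endpoint is nonsingular when \(h_1\ne h_2\). This shows the enlargement is far from marginal, but it is not needed for the statement.
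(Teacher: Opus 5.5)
Your proof is correct and is essentially the paper's argument: the paper uses the same diagonal quadratic with $\lambda=2$, $x_0=(1,1)$ and $h_1=h_2=1$. It notes that the first coordinate of $(1/4,1/9)$ forces the single-step coefficient to $3$, whose second coordinate is $1/7\neq1/9$, which is your comparison of $1+\lambda H$ with $1+\lambda(h_1+h_2)+\lambda^2h_1h_2$, specialized to one point rather than done for every two-step endpoint. One small slip: $\mathcal R_1\subset\mathcal R_2$ is literally false here, since your own computation shows every two-step endpoint lies strictly below the graph of $\phi$ and so the two sets are disjoint; what you need, and what the $h_2\to0$ limit gives, is $\mathcal R_1\subseteq\overline{\mathcal R_2}$, which the paper proves for any smooth strongly convex $E$ via $\|\mathcal P_\epsilon^E(y)-y\|_2^2\le2\epsilon\,(E(y)-\inf E)$.
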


Appendix~\ref{app:reachable} proves the inclusion and gives a
quadratic example for which it is strict. The reachable sets range
over positive step sizes; this proposition establishes a difference
between update compositions, not a fixed-\(T\) performance ordering.

\subsection{Local horizon subdivision}
With a fixed critic,
unrestricted statewise optimization, and inactive action constraints,
the deterministic flow and its implicit and explicit steps satisfy
\begin{equation}
\begin{aligned}
\partial_t\mu_t(s)
&=n_a\nabla_a\bar Q(s,\mu_t(s))
=:f_{\bar Q}(\mu_t)(s),\\
\mu^{\rm I}
&=\nu+T f_{\bar Q}(\mu^{\rm I}),\\
\mu^{\rm E}
&=\Pi_{\mathcal A}
\left[\nu+T f_{\bar Q}(\nu)\right].
\end{aligned}
\label{eq:flow_discretizations}
\end{equation}
These are the backward and forward steps in
Eq.~\eqref{eq:gf_discretizations}, specialized to the deterministic
actor. Appendix~\ref{app:behavior_anchor} derives the relations and
distinguishes the fixed reference from a sampled first-stage target.

At a fixed state, write \(x\in\mathbb R^{n_a}\) for the action and
\(f(x)=n_a\nabla_a\bar Q(s,x)\), with Jacobian \(J_f(x)\).
Let \(\Phi_T(x)\) be its time-\(T\) flow. The critic and its
normalization stay fixed across steps.

\begin{samepage}
\begin{proposition}[Local horizon composition]
\label{prop:fixed_budget_composition}
Suppose an ideal explicit or implicit update is a local map
\[
\Psi_h(x)=x+h f(x)+h^2p(x)+O(h^3),
\]
with \(f\in C^2\), \(p\in C^1\), and a remainder uniform on a
neighborhood containing the local flow and update paths.
Assume action constraints are inactive there.
For fixed \(K\ge1\) and \(T\to0\), define
\[
\Delta_2(x):=\frac12 J_f(x)f(x)-p(x).
\]
Then
\begin{equation}
\begin{aligned}
\Psi_{T/K}^{K}(x)-\Phi_T(x)
&=-\frac{T^2}{K}\Delta_2(x)+O(T^3),\\
\Psi_{T/K}^{K}(x)-\Psi_T(x)
&=\frac{K-1}{K}T^2\Delta_2(x)+O(T^3).
\end{aligned}
\label{eq:direct_composed_gap}
\end{equation}
\end{proposition}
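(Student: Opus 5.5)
The plan is a standard local-truncation-error comparison: expand the exact flow, a single update, and the $K$-fold composition to second order in $T$, then subtract.

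\textbf{Step 1: the flow.} Since $f\in C^2$, the flow satisfies $\Phi_T(x)=x+Tf(x)+\tfrac{T^2}{2}J_f(x)f(x)+O(T^3)$. This follows from Taylor's theorem in $t$, using $\ddot x=J_f(x)\dot x=J_f(x)f(x)$. The remainder is uniform on the neighborhood assumed in the statement.

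\textbf{Step 2: the $K$-fold composition.} Set $h=T/K$ and $x_j=\Psi_h^j(x)$. I would prove by induction on $j\le K$ that
\[
x_j=x+jhf(x)+h^2\Bigl(\tfrac{j(j-1)}{2}J_f(x)f(x)+j\,p(x)\Bigr)+O(h^3).
\]
For the inductive step, write $x_{j+1}=x_j+hf(x_j)+h^2p(x_j)+O(h^3)$. Expand $f(x_j)=f(x)+J_f(x)(x_j-x)+O(|x_j-x|^2)$, where $x_j-x=jhf(x)+O(h^2)$. Then $hf(x_j)=hf(x)+jh^2J_f(x)f(x)+O(h^3)$. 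Also $h^2p(x_j)=h^2p(x)+O(h^3)$, because $p\in C^1$ and $x_j-x=O(h)$. Summing the contributions gives the claimed coefficients, since $\sum_{i<j}i=j(j-1)/2$.

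Because $K$ is fixed, the accumulated $O(h^3)$ remainders stay $O(T^3)$. Uniformity holds as long as all iterates remain in the neighborhood where the local expansion is valid, which the hypothesis grants for small $T$. At $j=K$, with $Kh=T$, this yields
\[
\Psi_{T/K}^K(x)=x+Tf+\frac{T^2}{K^2}\Bigl(\frac{K(K-1)}{2}J_ff+Kp\Bigr)+O(T^3)
= x+Tf+\frac{T^2}{2}\Bigl(1-\frac1K\Bigr)J_ff+\frac{T^2}{K}p+O(T^3).
\]

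\textbf{Step 3: subtract.} Subtracting the flow expansion leaves
\[
T^2\Bigl(-\tfrac{1}{2K}J_ff+\tfrac1Kp\Bigr)=-\tfrac{T^2}{K}\Delta_2,
\]
which is the first identity. Taking $K=1$ gives $\Psi_T-\Phi_T=-T^2\Delta_2+O(T^3)$. Subtracting this from the $K$-step result gives $\Psi^K_{T/K}-\Psi_T=(1-\tfrac1K)T^2\Delta_2+O(T^3)$, which is the second identity.

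\textbf{Main obstacle: uniformity of the remainders.} I expect the only delicate point to be justifying uniformity of the remainders along the composed path. I would handle it with a bootstrap: first show that $|x_j-x|\le C jh$ for small $T$, using boundedness of $f$ and $p$ on the neighborhood. This keeps every iterate inside the neighborhood, so the uniform $O(h^3)$ bound and the Lipschitz and $C^1$ bounds on $f$, $J_f$ and $p$ all apply.

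The assumption that action constraints are inactive ensures that $\Pi_{\mathcal A}$ acts as the identity in the explicit update. For the implicit update, I would briefly note that the expansion of $\Psi_h$ with $p=J_ff$ follows from the implicit function theorem; this is not needed here, because the proposition takes the expansion of $\Psi_h$ as a hypothesis.
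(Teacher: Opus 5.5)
Your proposal is correct and matches the paper's proof. The paper also expands the flow to second order, sums the $K$ increments using $f(x_j)=f_0+jhJ_f(x_0)f_0+O(h^2)$ (your induction is the same computation), substitutes $h=T/K$, and subtracts; getting the second identity from the $K=1$ case is equivalent to subtracting the single-step expansion directly, and your bootstrap on $|x_j-x|$ only makes explicit the uniformity the paper takes from the hypothesis.
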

\end{samepage}

Explicit Euler has \(p=0\), while backward Euler has
\(p(x)=J_f(x)f(x)\). Both share the first-order displacement
\(Tf(x)\); at fixed \(K\) and small \(T\), subdivision reduces
the leading error relative to the common flow by \(1/K\).
The second relation measures the difference from one direct step.
The expansion assumes exact statewise maps and a fixed critic and
normalization throughout the chain. It does not establish convergence
as \(K\to\infty\) at fixed \(T\); Appendix~\ref{app:composition}
gives the proof and further scope conditions.

\subsection{Critic and optimization errors}
For approximate actor solves and an imperfect critic, the proximal
objectives give the following fixed-critic bound.

\begin{proposition}[Inexact improvement under a fixed critic]
\label{prop:inexact_improvement}
Let \(\widehat F=-\mathcal E_{\widehat Q}\) be fixed throughout a
chain with \(h_k>0\). Assume the preceding policy is feasible at
each stage and the attained proximal objective is at most its
value at that preceding policy plus \(\xi_k\ge0\).
For a reference objective \(F\), suppose
\(|\widehat F(\pi)-F(\pi)|\le\epsilon\) on the visited policies.
Then
\begin{equation}
F(\pi_K)-F(\pi_0)
\ge\sum_{k=1}^{K}\frac{\mathsf d_{\rho_{\mathcal D}}^2(\pi_k,\pi_{k-1})}{2h_k}
-\sum_{k=1}^{K}\xi_k-2\epsilon.
\label{eq:margin}
\end{equation}
\end{proposition}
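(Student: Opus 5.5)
The plan is to telescope a per-stage descent inequality and then move from \(\widehat F\) to \(F\) only at the two endpoints. The \(\epsilon\) term arises once per endpoint, giving \(2\epsilon\), not \(2K\epsilon\).

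\textbf{Per-stage inequality.} At stage \(k\), the attained proximal objective is
\[
\mathcal E_{\widehat Q}(\pi_k)+\frac{1}{2h_k}\mathsf d_{\rho_{\mathcal D}}^2(\pi_k,\pi_{k-1}).
\]
By hypothesis it is at most the objective evaluated at the feasible preceding policy \(\pi_{k-1}\), plus \(\xi_k\). At \(\pi_{k-1}\) the displacement term vanishes, because \(\mathsf d_{\rho}(\pi,\pi)=0\) for any statewise metric lifted by Eq.~\eqref{eq:policy_metric}. The hypothesis therefore reads
\[
\mathcal E_{\widehat Q}(\pi_k)+\frac{1}{2h_k}\mathsf d_{\rho_{\mathcal D}}^2(\pi_k,\pi_{k-1})\le \mathcal E_{\widehat Q}(\pi_{k-1})+\xi_k .
\]
Substituting \(\widehat F=-\mathcal E_{\widehat Q}\) gives
\[
\widehat F(\pi_k)-\widehat F(\pi_{k-1})\ge \frac{\mathsf d_{\rho_{\mathcal D}}^2(\pi_k,\pi_{k-1})}{2h_k}-\xi_k .
\]
This step uses only that the critic, and hence \(\widehat F\), is the same at every stage. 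The bound then needs no optimality, convexity, or existence of exact minimizers.

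\textbf{Telescoping.} Summing the per-stage inequality over \(k=1,\dots,K\) gives
\[
\widehat F(\pi_K)-\widehat F(\pi_0)\ge\sum_{k=1}^K \frac{\mathsf d_{\rho_{\mathcal D}}^2(\pi_k,\pi_{k-1})}{2h_k}-\sum_{k=1}^K\xi_k .
\]
The telescoping is exact only because \(\widehat F\) is fixed throughout the chain. This is where the fixed-critic assumption is essential: if each stage used its own normalized critic \(\bar Q_k\), the intermediate terms would not cancel.

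\textbf{Transfer to \(F\).} Both \(\pi_0\) and \(\pi_K\) are visited policies, so the approximation hypothesis gives \(F(\pi_K)\ge \widehat F(\pi_K)-\epsilon\) and \(F(\pi_0)\le\widehat F(\pi_0)+\epsilon\). Combining these with the telescoped bound yields Eq.~\eqref{eq:margin}.

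\textbf{Main obstacle.} The argument is bookkeeping; the only delicate point is keeping the hypotheses consistent. All objective values must be finite, which follows from the finite-expectation assumptions inherited from Lemma~\ref{prop:prox_equiv}. The displacement term must be evaluated with \(\rho_{\mathcal D}\) exactly as in Eq.~\eqref{eq:prox_operator}. Finally, the proposition must be read with \(\widehat F\) as the unnormalized or consistently normalized energy. If the implementation rescales the critic per stage by \(C_k\), the statement should be applied to the rescaled proximal form, with the \(h_k\) absorbing those constants.
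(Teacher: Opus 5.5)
Your proof is correct and follows the paper's argument. You derive the per-stage inequality $-\widehat F(\pi_k)+\mathsf d_{\rho_{\mathcal D}}^2(\pi_k,\pi_{k-1})/(2h_k)\le-\widehat F(\pi_{k-1})+\xi_k$ from feasibility of the predecessor, telescope it under the fixed learned objective, and apply the $\epsilon$ bound only at $\pi_0$ and $\pi_K$ to obtain $2\epsilon$. Your caveat that per-stage critic normalization would break the telescoping also matches the paper's remark about the live TD3+BC sweep.
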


Telescoping the stagewise inequalities gives the displacement term
minus accumulated optimization error; comparing the two endpoints
adds \(2\epsilon\) (Appendix~\ref{app:critic_error}). The result
bounds the change in \(F\) under the stated critic and optimization
errors, without assuming those errors are small in a trained actor.

\section{Experiments}
\label{sec:experiments}
\begingroup
\setlength{\textfloatsep}{10pt}
\setlength{\intextsep}{8pt}

We study improvement composition on nine D4RL locomotion tasks
\citep{fu2020d4rl}, with four training seeds per observed configuration.
Task-level scores average seeds; aggregate curves weight tasks equally.
Deterministic scores use the final checkpoint after \(10^6\) updates.
A fixed-\(T\) TD3+BC sweep tests horizon subdivision: all depths share
the critic-update count, while depth \(K\) uses \(K\) actor optimizer
calls per delayed update. We then test both gradient-flow
discretizations and Gaussian extraction with an IQL critic.

\subsection{Horizon and depth in TD3+BC}
\label{sec:local_budget}

TD3+BC reaches its highest sampled mean at \(T=1.5\). After
I-MPI-4's peak at \(T=4\), its mean stays at or above 90\% of that
peak through \(T=10\) and first drops below at \(T=12\). We define
\(R_1\) by \(0<T\le1.5\), \(R_2\) by \(1.5<T\le10\), and \(R_3\) by
\(10<T\le40\) on the sampled grid. These boundaries describe the
aggregate curve rather than a common threshold for every task.

\begin{figure}[H]
\centering
\includegraphics[width=.80\textwidth]{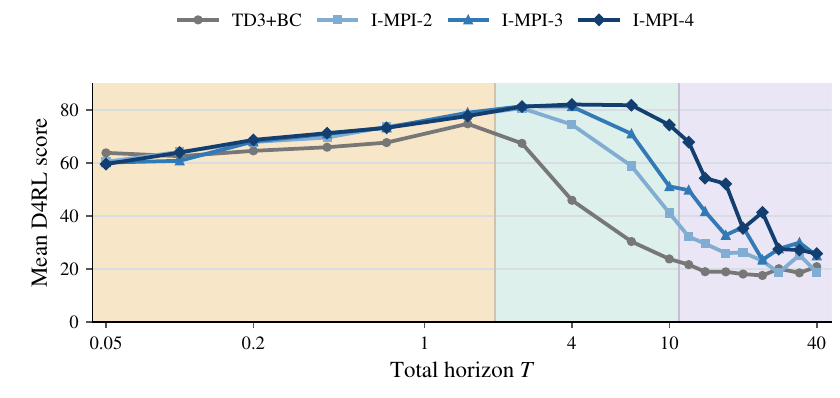}
\caption{Mean D4RL score by total horizon \(T\). Shading marks
\(R_1\), \(R_2\), and \(R_3\) as defined in the text.}
\label{fig:stability}
\end{figure}

In \(R_1\), returns are similar across depths. In \(R_2\) and \(R_3\), I-MPI-4
has a higher mean than TD3+BC. Across \(R_2\) and \(R_3\) combined (108
task--horizon cells), the fraction with four-seed mean below 20 is
25.0\% for I-MPI-4 versus 53.7\% for TD3+BC; I-MPI-4 exceeds
I-MPI-3 by \(\LocoHighGain\) points (95\% task-bootstrap interval
\(\LocoHighCI\)). Its task-level mean exceeds TD3+BC on eight of
nine tasks, with HalfCheetah medium as the exception
(Appendix~\ref{app:task_results}).
\begin{table}[H]
\centering
\caption{Regional mean D4RL score and percentage of low-return cells.
Higher mean scores and lower low-return percentages are better.}
\label{tab:regions}
\small\setlength{\tabcolsep}{6pt}
\begin{tabular}{@{}lccc@{\hspace{1.8em}}cccc@{}}
\toprule
& \multicolumn{3}{c}{Mean D4RL score} & \multicolumn{4}{c}{Low-return cells (\%)}\\
\cmidrule(lr){2-4}\cmidrule(l){5-8}
Method & \regiontag{regionone}{\(R_1\)} & \regiontag{regiontwo}{\(R_2\)} & \regiontag{regionthree}{\(R_3\)}
& \regiontag{regionone}{\(R_1\)} & \regiontag{regiontwo}{\(R_2\)} & \regiontag{regionthree}{\(R_3\)} & \(R_2+R_3\)\\
\midrule
\LocoRegionJointRows
\bottomrule
\end{tabular}
\par\smallskip
{\footnotesize \(R_1\), \(R_2\), and \(R_3\) contain 54, 36, and 72 task--horizon cells,
respectively; horizons have equal weight. Bold marks the best displayed
value in each column, including rounded ties.}
\end{table}

At a common small local horizon \(h=0.1\), the two-stage chain
\((K,T)=(2,0.2)\) averages 67.9, versus 62.6 for the one-stage
\((K,T)=(1,0.1)\) actor; here adding a stage also extends the total
horizon.

At fixed \(T\), increasing depth reduces the first actor's
coefficient \(h=T/K\) and changes the critic targets it supplies.
Within I-MPI-4 runs, the endpoint scores \(9.13\) points above its
first actor on average in \(R_2\); over the sampled \(12\le T\le20\)
part of \(R_3\), their mean difference is \(-0.23\). In a matched \(K=4\)
control, re-centering has positive and negative task-level effects
relative to a fixed reference (mean \(-0.73\) points;
Appendices~\ref{app:extraction} and~\ref{app:recenter_k4}).

\subsection{Implicit and explicit actor updates}
\label{sec:implicit_explicit}

On the shared \(4\le T\le20\) grid, increasing depth from
\(K=2\) to \(K=3\) raises the nine-task mean from
\(\IEETwoHigh\) to \(\IEEThreeHigh\) for E-MPI and from
\(\IEITwoHigh\) to \(\IEIThreeHigh\) for I-MPI.
TD3+BC averages \(\IETDHigh\) on the same grid
(Figure~\ref{fig:implicit_explicit}).

\begin{figure}[!htbp]
\centering
\includegraphics[width=.86\textwidth]{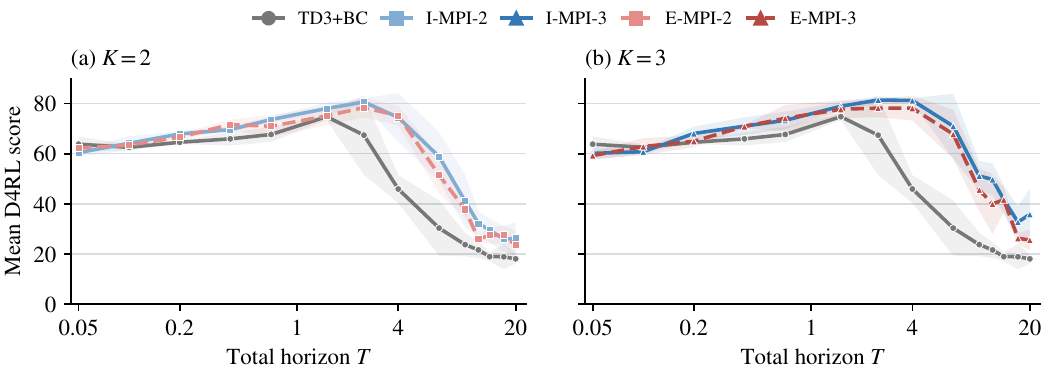}
\caption{I-MPI and E-MPI at matched depth and total horizon.
Bands show one sample standard deviation across seed-specific task means;
TD3+BC has \(K=1\).}
\label{fig:implicit_explicit}
\end{figure}

\subsection{Gaussian extraction with IQL critics}
\label{sec:iql_results}

IQL's actor-independent value targets let us vary extraction
without changing its value-learning procedure. Gaussian AWR--FR
responds differently across tasks (Figure~\ref{fig:iql_fr_main}).
Hopper medium-replay improves with depth around \(T=2.5\). On Hopper
and HalfCheetah expert at \(T=1.5\), the single-stage policies remain
strong where deeper endpoints collapse. Thus FR refinement helps in
some task--horizon settings but cannot be read as a uniform depth gain.

\begin{figure}[H]
\centering
\includegraphics[width=.92\textwidth,trim=0 4pt 0 7pt,clip]{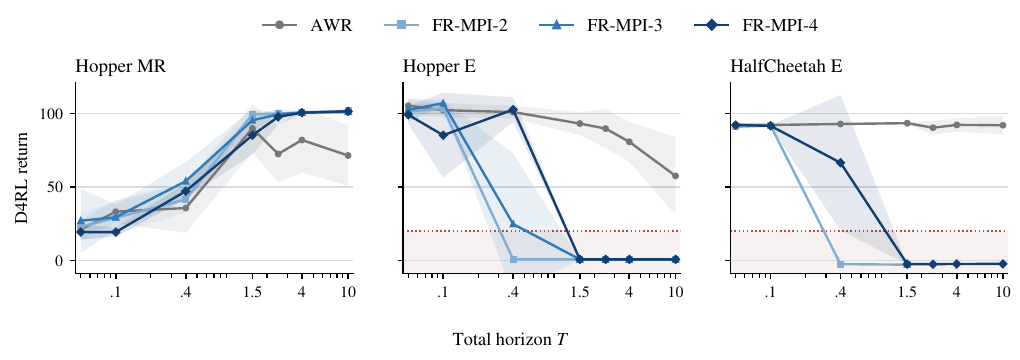}
\caption{Gaussian AWR and Fisher--Rao refinements with an IQL critic.
MR and E denote medium-replay and expert; bands show one sample standard
deviation across seeds. Missing configurations are blank; the shaded
region in expert panels marks returns below 20.}
\label{fig:iql_fr_main}
\end{figure}

With Gaussian \(Q\)+BC--\(W_2\), greater depth raises the nine-task
mean over intermediate horizons; all depths lose the gain at the
largest \(T\) (Figure~\ref{fig:iql_qbc_main}).

\begin{figure}[H]
\centering
\includegraphics[width=.78\textwidth,trim=0 8pt 0 4pt,clip]{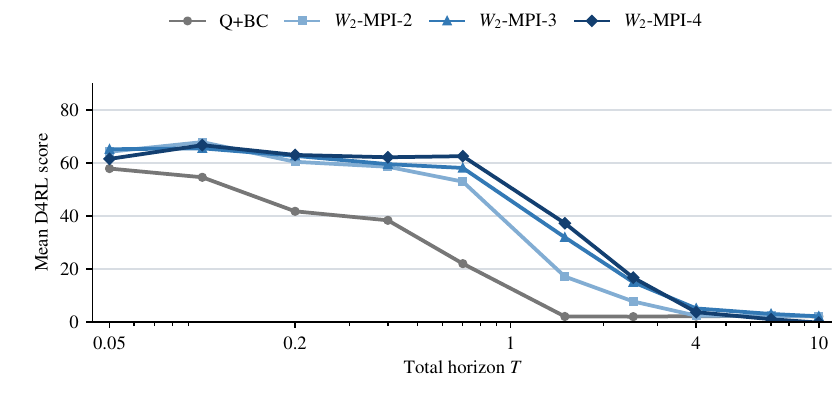}
\caption{Gaussian \(Q\)+BC and \(W_2\) refinements with an IQL critic.
The curves use the same measured horizon grid.}
\label{fig:iql_qbc_main}
\end{figure}

On Hopper and HalfCheetah expert at \(T=1.5\), Gaussian
\(Q\)+BC--\(W_2\) also fails, including at \(K=1\), whereas the
AWR single-stage actor retains high return. These configurations
change both the extraction loss and the refinement geometry; equal
nominal \(T\) does not calibrate their movement scales
(Appendix~\ref{app:iql_implementation}).

\subsection{TD3+BC target and endpoint diagnostics}
\label{sec:td3bc_diagnostics}

A separate matched diagnostic at five horizons compares the I-MPI-4
first actor supplying bootstrap targets, the TD3+BC target actor, and
the deployed I-MPI-4 endpoint. The first actor lies closer to observed successor actions
than the TD3+BC target actor, while the endpoint moves farther from
those actions (Figure~\ref{fig:matched_exposure}). Thus a smaller
target-action displacement can coexist with a more distant deployed
policy. The diagnostic measures movement relative to recorded actions;
it does not establish that either actor has a more accurate critic.

\begin{figure}[H]
\centering
\includegraphics[width=.80\textwidth,trim=0 10pt 0 7pt,clip]{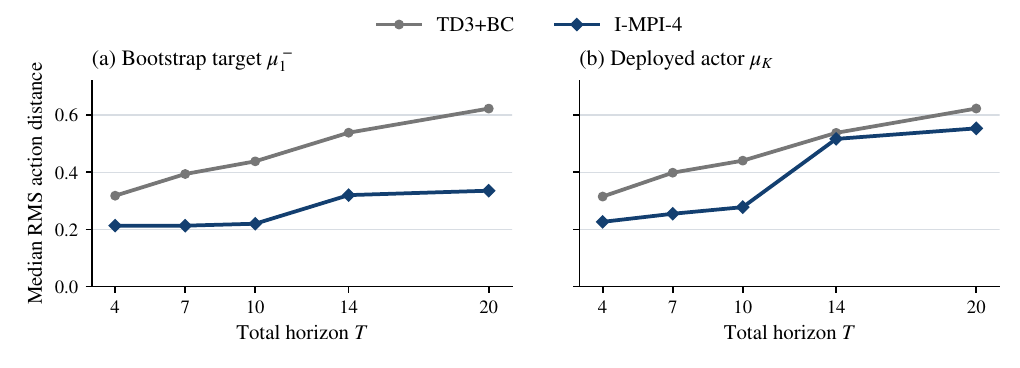}
\caption{RMS distance to observed successor actions: (a) bootstrap
target, (b) deployed actor. Gray: TD3+BC; blue: I-MPI-4. Points are
medians across the matched tasks and two diagnostic seeds.}
\label{fig:matched_exposure}
\end{figure}

\endgroup

\section{Related work}
\label{sec:related}

\paragraph{Behavior regularization and policy extraction.}
BCQ, BEAR, and BRAC limit departure from offline data
\citep{fujimoto2019offpolicy,kumar2019bear,wu2019brac};
TD3+BC and ReBRAC use quadratic actor penalties
\citep{fujimoto2021minimalist,tarasov2023rebrac}, whereas AWAC and IQL
use advantage-weighted extraction \citep{nair2021awac,kostrikov2022iql}.
Extraction matters even with fixed value learning \citep{park2024bottleneck}.

\paragraph{One-step and iterative offline improvement.}
One-step RL extracts from a behavior-policy value estimate
\citep{brandfonbrener2021offline}; iterative critic regularization can
coincide with it under specific assumptions
\citep{eysenbach2023connection}. STR uses behavior-supported trust regions
\citep{mao2023supported}, CFPI gives constrained one-step and iterative
operators \citep{li2023closedform}, and CPI re-centers KL regularization
across policy-iteration rounds \citep{ma2024iteratively}. MPI composes
actors under a common critic within each update, with total horizon
$T$ and depth $K$ varied separately; its first deterministic actor
supplies critic targets.

\paragraph{Policy geometry and expressive actors.}
Information-geometric trust regions and Wasserstein policy flows
provide movement costs
\citep{amari2000methods,schulman2015trust,zhang2018policy,moskovitz2020efficient,terpin2022trust,pfau2025wpo}.
Value Gradient Flow implements Wasserstein minimizing movement through
value-gradient particle transport with separate training and evaluation
budgets \citep{xu2026vgf}. MPI learns persistent actor stages, tests
subdivision at fixed \(T\), and deploys the final actor without
test-time transport. Diffusion-QL and FQL enrich actor parameterization
\citep{wang2023diffusion,park2025fql}; MPI changes the update sequence
within an actor family.

\section{Discussion and conclusion}
\label{sec:limitations}
\label{sec:conclusion}

Splitting a nominal horizon changes the reachable endpoints of ideal
proximal updates and the behavior of trained actors. On the measured
TD3+BC sweep, I-MPI-4 retains high return over more horizons and has
fewer low-return cells in \(R_2\) and \(R_3\). At fixed \(T\), its first actor
also uses a smaller coefficient and supplies the bootstrap targets.
That branch moves less relative to dataset actions while the endpoint
need not stay equally close (Appendix~\ref{app:target_movement}). The
fixed-reference control has mixed task-level effects, so the aggregate
gain cannot be attributed solely to re-centering. At a common small
local horizon, adding a second stage can raise the nine-task mean as
the total horizon grows.

Depth helps on some Gaussian extraction tasks and fails on others.
The high-return regions were identified retrospectively, and selecting
a horizon on two dynamics families gives mixed held-out outcomes
(Appendix~\ref{app:task_results}). The flow analysis assumes ideal
updates with a fixed critic and scale; training changes both the critic
and its normalization. Composition therefore matters in the tested
setting, but depth alone does not select a reliable horizon. Training
uses extra actors; inference deploys only the endpoint.

\label{sec:main_end}

\subsection*{AI use statement}
Generative AI tools assisted with drafting and polishing manuscript
text, literature retrieval, formulating and checking mathematical
proofs, experiment-design feedback, analysis code, interpretation of
logged results, and figure preparation. The research idea originated
with the authors, who are responsible for the final scientific claims
and supporting evidence.

\bibliography{references}

@book{amari2000methods,
  author = {Amari, Shun-ichi and Nagaoka, Hiroshi},
  title = {Methods of Information Geometry},
  publisher = {American Mathematical Society},
  year = {2000}
}

@article{gelbrich1990wasserstein,
  author = {Gelbrich, Matthias},
  title = {On a Formula for the {L2} {Wasserstein} Metric between Measures on {Euclidean} and {Hilbert} Spaces},
  journal = {Mathematische Nachrichten},
  volume = {147},
  pages = {185--203},
  year = {1990},
  doi = {10.1002/mana.19901470121}
}

@book{hairer1993solving,
  author = {Hairer, Ernst and N{\o}rsett, Syvert P. and Wanner, Gerhard},
  title = {Solving Ordinary Differential Equations {I}: Nonstiff Problems},
  edition = {2},
  publisher = {Springer},
  year = {1993},
  doi = {10.1007/978-3-540-78862-1}
}

@article{kurtek2015bayesian,
  author = {Kurtek, Sebastian and Bharath, Karthik},
  title = {Bayesian Sensitivity Analysis with the {Fisher--Rao} Metric},
  journal = {Biometrika},
  volume = {102},
  number = {3},
  pages = {601--616},
  year = {2015},
  doi = {10.1093/biomet/asv026}
}

@book{ambrosio2005gradient,
  author = {Ambrosio, Luigi and Gigli, Nicola and Savar\'e, Giuseppe},
  title = {Gradient Flows in Metric Spaces and in the Space of Probability Measures},
  publisher = {Birkh\"auser},
  year = {2005}
}

@inproceedings{brandfonbrener2021offline,
  author = {Brandfonbrener, David and Whitney, William and Ranganath, Rajesh and Bruna, Joan},
  title = {Offline {RL} Without Off-Policy Evaluation},
  booktitle = {Advances in Neural Information Processing Systems},
  year = {2021}
}

@misc{fu2020d4rl,
  author = {Fu, Justin and Kumar, Aviral and Nachum, Ofir and Tucker, George and Levine, Sergey},
  title = {{D4RL}: Datasets for Deep Data-Driven Reinforcement Learning},
  year = {2020},
  eprint = {2004.07219},
  archivePrefix = {arXiv}
}

@inproceedings{fujimoto2021minimalist,
  author = {Fujimoto, Scott and Gu, Shixiang Shane},
  title = {A Minimalist Approach to Offline Reinforcement Learning},
  booktitle = {Advances in Neural Information Processing Systems},
  year = {2021}
}

@inproceedings{fujimoto2019offpolicy,
  author = {Fujimoto, Scott and Meger, David and Precup, Doina},
  title = {Off-Policy Deep Reinforcement Learning without Exploration},
  booktitle = {Proceedings of the 36th International Conference on Machine Learning},
  year = {2019}
}

@article{jordan1998variational,
  author = {Jordan, Richard and Kinderlehrer, David and Otto, Felix},
  title = {The Variational Formulation of the {Fokker--Planck} Equation},
  journal = {SIAM Journal on Mathematical Analysis},
  volume = {29},
  number = {1},
  pages = {1--17},
  year = {1998},
  doi = {10.1137/S0036141096303359}
}

@inproceedings{kumar2019bear,
  author = {Kumar, Aviral and Fu, Justin and Tucker, George and Levine, Sergey},
  title = {Stabilizing Off-Policy {Q}-Learning via Bootstrapping Error Reduction},
  booktitle = {Advances in Neural Information Processing Systems},
  year = {2019}
}

@article{moskovitz2020efficient,
  author = {Moskovitz, Ted and Arbel, Michael and Husz\'ar, Ferenc and Gretton, Arthur},
  title = {Efficient Wasserstein Natural Gradients for Reinforcement Learning},
  journal = {arXiv preprint arXiv:2010.05380},
  year = {2020}
}

@inproceedings{pfau2025wpo,
  author = {Pfau, David and Davies, Ian and Borsa, Diana L. and Ara{\'u}jo, Jo{\~a}o G. M. and Tracey, Brendan D. and van Hasselt, Hado},
  title = {Wasserstein Policy Optimization},
  booktitle = {Proceedings of the 42nd International Conference on Machine Learning},
  series = {Proceedings of Machine Learning Research},
  volume = {267},
  pages = {49128--49149},
  year = {2025}
}

@inproceedings{schulman2015trust,
  author = {Schulman, John and Levine, Sergey and Moritz, Philipp and Jordan, Michael I. and Abbeel, Pieter},
  title = {Trust Region Policy Optimization},
  booktitle = {Proceedings of the 32nd International Conference on Machine Learning},
  year = {2015}
}

@misc{tarasov2022corl,
  author = {Tarasov, Denis and Nikulin, Alexander and Akimov, Dmitry and Kurenkov, Vladislav and Kolesnikov, Sergey},
  title = {{CORL}: Research-Oriented Deep Offline Reinforcement Learning Library},
  howpublished = {NeurIPS Offline Reinforcement Learning Workshop},
  year = {2022}
}

@article{terpin2022trust,
  author = {Terpin, Antonio and Lanzetti, Nicolas and Yardim, Batuhan and D\"orfler, Florian and Ramponi, Giorgia},
  title = {Trust Region Policy Optimization with Optimal Transport Discrepancies: Duality and Algorithm for Continuous Actions},
  journal = {arXiv preprint arXiv:2210.11137},
  year = {2022}
}

@inproceedings{zhang2018policy,
  author = {Zhang, Ruiyi and Chen, Changyou and Li, Chunyuan and Carin, Lawrence},
  title = {Policy Optimization as Wasserstein Gradient Flows},
  booktitle = {Proceedings of the 35th International Conference on Machine Learning},
  year = {2018}
}

@article{wu2019brac,
  author = {Wu, Yifan and Tucker, George and Nachum, Ofir},
  title = {Behavior Regularized Offline Reinforcement Learning},
  journal = {arXiv preprint arXiv:1911.11361},
  year = {2019}
}

@inproceedings{tarasov2023rebrac,
  author = {Tarasov, Denis and Kurenkov, Vladislav and Nikulin, Alexander and Kolesnikov, Sergey},
  title = {Revisiting the Minimalist Approach to Offline Reinforcement Learning},
  booktitle = {Advances in Neural Information Processing Systems},
  year = {2023}
}

@inproceedings{kumar2020cql,
  author = {Kumar, Aviral and Zhou, Aurick and Tucker, George and Levine, Sergey},
  title = {Conservative {Q}-Learning for Offline Reinforcement Learning},
  booktitle = {Advances in Neural Information Processing Systems},
  year = {2020}
}

@inproceedings{nair2021awac,
  author = {Nair, Ashvin and Gupta, Abhishek and Dalal, Murtaza and Levine, Sergey},
  title = {{AWAC}: Accelerating Online Reinforcement Learning with Offline Datasets},
  booktitle = {International Conference on Learning Representations},
  year = {2021}
}

@inproceedings{kostrikov2022iql,
  author = {Kostrikov, Ilya and Nair, Ashvin and Levine, Sergey},
  title = {Offline Reinforcement Learning with Implicit {Q}-Learning},
  booktitle = {International Conference on Learning Representations},
  year = {2022}
}

@inproceedings{wang2023diffusion,
  author = {Wang, Zhendong and Hunt, Jonathan J. and Zhou, Mingyuan},
  title = {Diffusion Policies as an Expressive Policy Class for Offline Reinforcement Learning},
  booktitle = {International Conference on Learning Representations},
  year = {2023}
}

@inproceedings{park2025fql,
  author = {Park, Seohong and Li, Qiyang and Levine, Sergey},
  title = {Flow {Q}-Learning},
  booktitle = {Proceedings of the 42nd International Conference on Machine Learning},
  year = {2025}
}

@misc{xu2026vgf,
  author = {Xu, Haoran and Hu, Kaiwen and Sojoudi, Somayeh and Zhang, Amy},
  title = {Reinforcement Learning via Value Gradient Flow},
  year = {2026},
  eprint = {2604.14265},
  archivePrefix = {arXiv},
  primaryClass = {cs.LG}
}

@inproceedings{park2024bottleneck,
  title={Is Value Learning Really the Main Bottleneck in Offline {RL}?},
  author={Park, Seohong and Frans, Kevin and Levine, Sergey and Kumar, Aviral},
  booktitle={Advances in Neural Information Processing Systems},
  year={2024}
}

@article{levine2020offline,
  author = {Levine, Sergey and Kumar, Aviral and Tucker, George and Fu, Justin},
  title = {Offline Reinforcement Learning: Tutorial, Review, and Perspectives on Open Problems},
  journal = {arXiv preprint arXiv:2005.01643},
  year = {2020}
}

@article{parikh2014proximal,
  author = {Parikh, Neal and Boyd, Stephen},
  title = {Proximal Algorithms},
  journal = {Foundations and Trends in Optimization},
  volume = {1},
  number = {3},
  pages = {127--239},
  year = {2014}
}

@inproceedings{eysenbach2023connection,
  author = {Eysenbach, Benjamin and Geist, Matthieu and Levine, Sergey and Salakhutdinov, Ruslan},
  title = {A Connection between One-Step {RL} and Critic Regularization in Reinforcement Learning},
  booktitle = {Proceedings of the 40th International Conference on Machine Learning},
  series = {Proceedings of Machine Learning Research},
  volume = {202},
  pages = {9485--9507},
  year = {2023},
  url = {https://proceedings.mlr.press/v202/eysenbach23a.html}
}

@inproceedings{li2023closedform,
  author = {Li, Jiachen and Zhang, Edwin and Yin, Ming and Bai, Qinxun and Wang, Yu-Xiang and Wang, William Yang},
  title = {Offline Reinforcement Learning with Closed-Form Policy Improvement Operators},
  booktitle = {Proceedings of the 40th International Conference on Machine Learning},
  series = {Proceedings of Machine Learning Research},
  volume = {202},
  pages = {20485--20528},
  year = {2023},
  url = {https://proceedings.mlr.press/v202/li23av.html}
}

@inproceedings{ma2024iteratively,
  author = {Ma, Yi and Hao, Jianye and Hu, Xiaohan and Zheng, Yan and Xiao, Chenjun},
  title = {Iteratively Refined Behavior Regularization for Offline Reinforcement Learning},
  booktitle = {Advances in Neural Information Processing Systems},
  volume = {37},
  year = {2024},
  url = {https://proceedings.neurips.cc/paper_files/paper/2024/file/663bce02a0050c4a11f1eb8a7f1429d3-Paper-Conference.pdf}
}

@inproceedings{mao2023supported,
  author = {Mao, Yixiu and Zhang, Hongchang and Chen, Chen and Xu, Yi and Ji, Xiangyang},
  title = {Supported Trust Region Optimization for Offline Reinforcement Learning},
  booktitle = {Proceedings of the 40th International Conference on Machine Learning},
  series = {Proceedings of Machine Learning Research},
  volume = {202},
  pages = {23829--23851},
  year = {2023},
  url = {https://proceedings.mlr.press/v202/mao23c.html}
}
\clearpage
\appendix
\setcounter{equation}{0}
\setcounter{proposition}{0}
\setcounter{lemma}{0}
\setcounter{table}{0}
\setcounter{figure}{0}
\renewcommand{\theequation}{S\arabic{equation}}
\renewcommand{\theproposition}{S\arabic{proposition}}
\renewcommand{\thelemma}{S\arabic{lemma}}
\renewcommand{\thetable}{S\arabic{table}}
\renewcommand{\thefigure}{S\arabic{figure}}
\renewcommand{\theHequation}{supp.\arabic{equation}}
\renewcommand{\theHproposition}{supp.\arabic{proposition}}
\renewcommand{\theHlemma}{supp.\arabic{lemma}}
\renewcommand{\theHtable}{supp.\arabic{table}}
\renewcommand{\theHfigure}{supp.\arabic{figure}}

\section*{Appendix}
\label{sec:appendix_start}
\raggedbottom

\section{Proofs and geometric derivations}
\label{app:proofs}

\subsection{Behavior anchoring: proof of Lemma~\ref{prop:prox_equiv}}
\label{app:behavior_anchor}
\begin{equation}
\E_{(s,a)}\frac{\|\mu(s)-a\|_2^2}{n_a}
=\mathsf d_{\rho_{\mathcal D}}^2(\mu,b)+\mathrm{const}.
\label{eq:bc_decomposition}
\end{equation}

Let $b(s)=\mathbb E[a\mid s]$. The conditional bias--variance identity
\[
\mathbb E[\|\mu(s)-a\|_2^2\mid s]
=\|\mu(s)-b(s)\|_2^2+
\mathbb E[\|a-b(s)\|_2^2\mid s]
\]
proves Eq.~\eqref{eq:bc_decomposition} after averaging over states and
dividing by $n_a$. The second term is independent of $\mu$.
Dividing the anchored objective
$-\alpha\mathbb E_s\bar Q(s,\mu(s))+\mathsf d_{\rho_{\mathcal D}}^2(\mu,\nu)$
by $\alpha$ gives Eq.~\eqref{eq:prox_operator} with coefficient $h=\alpha/2$ and normalized energy.
Under the metric inner product
$n_a^{-1}\mathbb E_s\langle\cdot,\cdot\rangle$, the negative energy gradient is
$f_{\bar Q}(\mu)(s)=n_a\nabla_a\bar Q(s,\mu(s))$.
The usual proximal optimality condition \citep{parikh2014proximal} gives the
corresponding update: for unrestricted statewise optimization, an interior differentiable minimizer
satisfies $(\mu^+-\nu)/T=f_{\bar Q}(\mu^+)$; evaluating the same vector
field at $\nu$ gives the explicit step in Eq.~\eqref{eq:flow_discretizations}.

The conditional-mean anchor is admissible on a closed convex action space.
The result assumes a nonempty minimizer set; continuity on a compact
admissible class suffices. The same rescaling applies to a
ReBRAC-style squared actor penalty with fixed critic and coefficient
\citep{tarasov2023rebrac}.

\paragraph{Sampled first hop of E-MPI.}
\label{app:sampled_explicit}
For a fixed positive scale $C$, unrestricted population MSE regression to
the sampled explicit targets yields
\[
\mu_1^{\rm E}(s)=\mathbb E\!\left[
\Pi_{\mathcal A}\!\left(a+\frac{n_a h}{C}\nabla_a\widehat Q(s,a)\right)
\,\middle|\,s\right].
\]
Without clipping, this is
$b(s)+(n_a h/C)\mathbb E[\nabla_a\widehat Q(s,a)\mid s]$,
which generally differs from a gradient step at $b(s)$.
Thus the sampled first E-MPI hop differs from an explicit step at $b$.

\subsection{Proof of Proposition~\ref{prop:noncollapse}}
\label{app:reachable}
Using the standard proximal-map definition \citep{parikh2014proximal},
for any smooth strongly convex \(E\), the proximal maps are unique and
\(\mathcal P_\epsilon^E(y)\to y\) as \(\epsilon\downarrow0\). Indeed,
optimality and the lower bound on \(E\) give
\[
\|\mathcal P_\epsilon^E(y)-y\|_2^2
\le 2\epsilon\bigl(E(y)-\inf E\bigr).
\]
Every \(z=\mathcal P_h^E(x_0)\in\mathcal R_1(E,x_0)\) is therefore the
limit of \(\mathcal P_\epsilon^E(z)\in\mathcal R_2(E,x_0)\), proving
\(\overline{\mathcal R_1(E,x_0)}\subseteq
\overline{\mathcal R_2(E,x_0)}\).

Consider \(E(x)=\tfrac12(x_1^2+2x_2^2)\) and \(x_0=(1,1)\).
Direct minimization gives
\begin{equation}
\mathcal P_h^E(x_0)=\left(\frac1{1+h},\frac1{1+2h}\right),
\qquad (\mathcal P_1^E\circ\mathcal P_1^E)(x_0)
=\left(\frac14,\frac19\right).
\label{eq:unreachable_endpoint}
\end{equation}
If a sequence of single-step endpoints converged to
\((1/4,1/9)\), its first coordinate would force the corresponding
coefficients to converge to \(3\); its second coordinate would then
converge to \(1/7\), a contradiction. Hence
\((1/4,1/9)\in\mathcal R_2(E,x_0)\setminus
\overline{\mathcal R_1(E,x_0)}\), and the inclusion is strict.

\subsection{Proof of Proposition~\ref{prop:fixed_budget_composition}}
\label{app:composition}
We use the local truncation expansion of forward and backward Euler
steps \citep{hairer1993solving}. Under the proposition's smoothness, uniform-remainder, and interiority
assumptions, write $f_0=f(x_0)$, $J_0=J_f(x_0)$, and $p_0=p(x_0)$.
For fixed $K$, consistency gives $x_j=x_0+jhf_0+O(h^2)$ and hence
$f(x_j)=f_0+jhJ_0f_0+O(h^2)$. Summing the increments gives
\begin{align*}
x_K-x_0
&=h\sum_{j=0}^{K-1}f(x_j)+h^2\sum_{j=0}^{K-1}p(x_j)+O(Kh^3)\\
&=Khf_0+h^2\frac{K(K-1)}{2}J_0f_0+Kh^2p_0+O(h^3).
\end{align*}
Substituting $h=T/K$ yields
\[
\Psi_{T/K}^{K}(x_0)
=x_0+Tf_0+T^2\!\left[\frac{K-1}{2K}J_0f_0+\frac{1}{K}p_0\right]+O(T^3).
\]
The exact flow has expansion
$\Phi_T(x_0)=x_0+Tf_0+\tfrac12T^2J_0f_0+O(T^3)$.
Subtracting it, or the single-step expansion
$\Psi_T(x_0)=x_0+Tf_0+T^2p_0+O(T^3)$, proves the two identities in
Eq.~\eqref{eq:direct_composed_gap}.
Explicit Euler has $p_0=0$; Taylor expansion of the local backward-Euler
branch gives $p_0=J_0f_0$.

At fixed $T$, convergence as $K\to\infty$ additionally requires a unique
flow on $[0,T]$, sufficient smoothness around the full flow and numerical
paths, and stable steps. For ReLU critics, the action gradient is constant
inside an activation region. There the explicit update and same-region
stationary implicit branch coincide, although a global proximal minimizer
can lie in another region. The smooth expansion does not apply across an
activation boundary.

\subsection{Proof of Proposition~\ref{prop:inexact_improvement}}
\label{app:critic_error}
Let $\pi_{k-1}$ belong to the feasible class of stage $k$. It suffices
that the approximate solve satisfies
\[
-\widehat F(\pi_k)+\frac{\mathsf d_{\rho_{\mathcal D}}^2(\pi_k,\pi_{k-1})}{2h_k}
\le -\widehat F(\pi_{k-1})+\xi_k.
\]
An objective value within $\xi_k$ of the proximal infimum implies
this inequality, the attained-objective condition used in the proof.
Summing it telescopes the fixed learned objective. Applying the
error bound at $\pi_0$ and $\pi_K$ gives Eq.~\eqref{eq:margin}.
Only endpoint critic errors enter the bound, while optimization errors
accumulate across stages. A uniform action-value error bound implies
the assumed objective error under the fixed dataset state marginal.
The bound also applies to feasible stochastic policies when the
objective and normalization remain fixed throughout the chain.
The live TD3+BC sweep changes normalization scales across stages.

\subsection{Gaussian geometry for MPI refinement}
\label{app:gaussian_geometry}
For diagonal Gaussians \(\pi=\mathcal N(m,\operatorname{diag}(\sigma^2))\)
and \(\nu=\mathcal N(\widetilde m,\operatorname{diag}(\widetilde\sigma^2))\),
the Wasserstein distance follows from the Gaussian formula
\citep{gelbrich1990wasserstein}:
\begin{equation}
W_2^2(\pi,\nu)
=\|m-\widetilde m\|_2^2+\|\sigma-\widetilde\sigma\|_2^2.
\label{eq:gaussian_w2}
\end{equation}
Its Dirac limit gives deterministic refinement. The square-root-density
representation gives the ambient Fisher--Rao distance
\citep{amari2000methods,kurtek2015bayesian}; with the normalization
whose infinitesimal squared line element is the full Fisher information
metric, for positive Gaussian densities it takes the form
\begin{equation}
d_{\rm FR}(\pi,\nu)
=2\arccos\!\left(\int\sqrt{\pi(a)\nu(a)}\,da\right).
\label{eq:fr_distance}
\end{equation}
This is the ambient density-space distance evaluated on Gaussians,
rather than the intrinsic geodesic distance within the Gaussian family.

For diagonal Gaussians with positive standard deviations, the
Bhattacharyya coefficient in Eq.~\eqref{eq:fr_distance} is
\begin{equation}
B(\pi,\nu)=\prod_{j=1}^{n_a}
\left(\frac{2\sigma_j\widetilde\sigma_j}
{\sigma_j^2+\widetilde\sigma_j^2}\right)^{1/2}
\exp\!\left[-\frac{(m_j-\widetilde m_j)^2}
{4(\sigma_j^2+\widetilde\sigma_j^2)}\right].
\label{eq:gaussian_bc}
\end{equation}
Thus $d_{\rm FR}^2=4\arccos^2 B$ is evaluated analytically.
The shortest ambient path need not be Gaussian. Both distances use
sums over action coordinates; rescaling a distance changes its matching
coefficient $h$.

The diagonal-Gaussian proximal minimizer may differ from the
unrestricted distributional solution; optimizing within the family
is generally not equivalent to projecting the unrestricted solution.

To describe local motion at a fixed state, set
$G(m,\sigma)=\E_\epsilon\widehat Q(s,m+\sigma\odot\epsilon)$,
where $\epsilon\sim\mathcal N(0,I)$. Under differentiation beneath
the expectation,
\[
g_m=\nabla_mG=\E_\epsilon\nabla_a\widehat Q(s,m+\sigma\odot\epsilon),
\qquad
g_\sigma=\nabla_\sigma G
=\E_\epsilon[\epsilon\odot\nabla_a\widehat Q(s,m+\sigma\odot\epsilon)].
\]
The local squared line elements are
\[
 ds_{W_2}^2=\sum_j(dm_j^2+d\sigma_j^2),\qquad
 ds_{\rm FR}^2=\sum_j\frac{dm_j^2+2d\sigma_j^2}{\sigma_j^2}.
\]
\begin{lemma}[Local Gaussian refinement]
\label{lem:gaussian_motion}
Assume \(G\) is smooth with differentiation beneath the expectation,
all reference variances are positive, and action and parameter constraints
are inactive. For a smooth local proximal branch within the
diagonal-Gaussian family converging to the reference as \(h\to0\),
using the coordinate-sum distances above,
the first-order updates are
\begin{equation}
\begin{array}{lll}
W_2: & \Delta m=h g_m+O(h^2), & \Delta\sigma=h g_\sigma+O(h^2),\\[2pt]
\mathrm{FR}: & \Delta m=h\sigma^2\odot g_m+O(h^2), &
\Delta\sigma=\tfrac h2\sigma^2\odot g_\sigma+O(h^2).
\end{array}
\label{eq:gaussian_local_motion}
\end{equation}
All quantities on the right are evaluated at the reference policy,
with the same critic and coefficient.
\end{lemma}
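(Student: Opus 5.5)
The approach is to write the first-order optimality conditions of the local proximal problem in the coordinates \((m,\sigma)\) and solve them perturbatively in \(h\).

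\textbf{Setting up the problem.} At a fixed state the stage objective is
\[
-G(m,\sigma)+\frac{1}{2h}d^2\bigl((m,\sigma),(\widetilde m,\widetilde\sigma)\bigr),
\]
where \(d\) is either the Gaussian \(W_2\) distance of Eq.~\eqref{eq:gaussian_w2} or the ambient Fisher--Rao distance of Eq.~\eqref{eq:fr_distance}. The statewise aggregation in Eq.~\eqref{eq:policy_metric} decouples across states under unrestricted statewise optimization, so it suffices to work at one state. Constraints are inactive, so the branch is an interior stationary point:
\[
\nabla G(m,\sigma)=\frac{1}{2h}\nabla_{(m,\sigma)}d^2\bigl((m,\sigma),(\widetilde m,\widetilde\sigma)\bigr).
\]

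\textbf{Expanding the squared distance.} Near the diagonal, expand \(d^2\) to second order:
\[
d^2=\Delta^\top M(\widetilde m,\widetilde\sigma)\,\Delta+O(\|\Delta\|^3),
\]
where \(\Delta=(m-\widetilde m,\sigma-\widetilde\sigma)\) and \(M\) is the metric tensor of the stated line element.

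For \(W_2\) this is exact with \(M=I\), because Eq.~\eqref{eq:gaussian_w2} is already quadratic in \(\Delta\).

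For Fisher--Rao, expand the Bhattacharyya coefficient of Eq.~\eqref{eq:gaussian_bc} coordinatewise:
\[
\Bigl(\frac{2\sigma\widetilde\sigma}{\sigma^2+\widetilde\sigma^2}\Bigr)^{1/2}=1-\frac{\delta\sigma^2}{4\widetilde\sigma^2}+O(\delta\sigma^3),
\qquad
\exp\Bigl[-\frac{\delta m^2}{4(\sigma^2+\widetilde\sigma^2)}\Bigr]=1-\frac{\delta m^2}{8\widetilde\sigma^2}+O(3).
\]
Hence
\[
B=1-\sum_j\frac{\delta m_j^2+2\delta\sigma_j^2}{8\widetilde\sigma_j^2}+O(3).
\]
Then \(4\arccos^2(1-u)=8u+O(u^2)\) gives
\[
d_{\rm FR}^2=\sum_j\frac{\delta m_j^2+2\delta\sigma_j^2}{\widetilde\sigma_j^2}+O(\|\Delta\|^3),
\]
which matches \(ds_{\rm FR}^2\). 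Positivity of the reference variances keeps these expansions valid, and the cubic remainder is smooth in a neighborhood of the reference.

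\textbf{Solving for the step.} Differentiating the expansion gives
\[
\nabla_\Delta d^2=2M\Delta+O(\|\Delta\|^2),
\]
so stationarity becomes
\[
M\Delta=h\,\nabla G(\widetilde m+\delta m,\widetilde\sigma+\delta\sigma)+O(\|\Delta\|^2).
\]
The branch converges to the reference as \(h\to0\), and \(G\) is smooth, so \(\nabla G\) is bounded near the reference. This gives \(\Delta=O(h)\) and therefore
\[
\nabla G(\text{new})=\nabla G(\text{ref})+O(h).
\]
Substituting back yields \(\Delta=hM^{-1}g+O(h^2)\), with \(g=(g_m,g_\sigma)\) evaluated at the reference.

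Reading off the components:
\begin{itemize}
\item For \(W_2\), \(M^{-1}=I\), so \(\Delta m=hg_m+O(h^2)\) and \(\Delta\sigma=hg_\sigma+O(h^2)\).
\item For Fisher--Rao, \(M^{-1}=\operatorname{diag}(\widetilde\sigma^2,\widetilde\sigma^2/2)\), so \(\Delta m=h\sigma^2\odot g_m+O(h^2)\) and \(\Delta\sigma=\tfrac h2\sigma^2\odot g_\sigma+O(h^2)\).
\end{itemize}
These are the updates stated in Eq.~\eqref{eq:gaussian_local_motion}. The formulas for \(g_m\) and \(g_\sigma\) follow from the reparametrization \(a=m+\sigma\odot\epsilon\) and differentiation beneath the expectation.

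\textbf{Main obstacle.} The delicate step is the a priori bound \(\Delta=O(h)\) together with uniform control of the remainder in the expansion of \(\arccos\). I would first show \(\|\Delta\|\to0\), which is given by the branch hypothesis. Near \(\Delta=0\), the smallest eigenvalue of \(M\) is bounded below because the variances are positive, so \(\|M\Delta+O(\|\Delta\|^2)\|\ge c\|\Delta\|\) for small \(\Delta\). This gives \(\|\Delta\|\le C h\|\nabla G\|\).

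The \(\arccos\) expansion needs one extra check. Since \(\arccos(1-u)\sim\sqrt{2u}\), the expression \(\arccos^2(1-u)\) is analytic in \(u\), with \(\arccos^2(1-u)=2u+u^2/3+\cdots\). So \(d_{\rm FR}^2\) is a smooth function of \(\Delta\) near \(0\), and differentiating its Taylor expansion termwise is legitimate.
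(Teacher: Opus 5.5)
Your proposal is correct and follows the same route as the paper's proof. Both arguments expand \(d^2\) to second order with a metric tensor \(M\) (\(M=I\) for \(W_2\), \(M=\operatorname{diag}(\sigma^{-2},2\sigma^{-2})\) for Fisher--Rao) and solve the stationarity condition of \(-G+d^2/(2h)\) on the local branch to obtain \(\delta=hM^{-1}\nabla G+O(h^2)\); your version also derives the Fisher--Rao tensor from the Bhattacharyya formula and justifies the a priori bound \(\Delta=O(h)\) and the smoothness of \(\arccos^2(1-u)\), which the paper leaves implicit.
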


\begin{proof}
For \(\theta=(m,\sigma)\), write the local squared-distance expansion as
\(d^2(\theta+\delta,\theta)=\delta^\top M(\theta)\delta+O(\|\delta\|^3)\).
The stationary equation for \(-G+d^2/(2h)\) on the local branch gives
\(\delta=hM^{-1}\nabla G+O(h^2)\).
For \(W_2\), \(M=I\); for FR,
\(M=\operatorname{diag}(\sigma^{-2},2\sigma^{-2})\).
Substitution proves Eq.~\eqref{eq:gaussian_local_motion}.
\end{proof}

A small variance attenuates the FR mean velocity locally.
Using $\widehat Q(s,m)$ gives $g_\sigma=0$. For Gaussian $W_2$
refinement with a mean-only energy, the optimal standard deviation
is the reference standard deviation.

\subsection{Local Fisher--Rao connection to advantage weighting}
\label{app:awr_connection}
The local Fisher metric and KL expansion are standard
\citep{amari2000methods}. For a regular parametric density family and a small parameter
increment $\delta$, let $I(\theta)$ denote its Fisher information
matrix. Normalization cancels the linear term in a Taylor expansion
of either KL orientation, giving
\[
\begin{aligned}
D_{\rm KL}(p_{\theta+\delta}\|p_\theta)
&=\tfrac12\delta^\top I(\theta)\delta+o(\|\delta\|^2),\\
D_{\rm KL}(p_\theta\|p_{\theta+\delta})
&=\tfrac12\delta^\top I(\theta)\delta+o(\|\delta\|^2),\\
d_{\rm FR}^2(p_{\theta+\delta},p_\theta)
&=\delta^\top I(\theta)\delta+o(\|\delta\|^2).
\end{aligned}
\]
These expansions require smooth positive densities and a regular
parameterization near the reference. Replacing $d_{\rm FR}^2/(2h)$ locally by
$D_{\rm KL}(\pi\|\nu)/h$ yields the distributional problem
\[
\max_\pi\;\E_\pi A(s,a)-h^{-1}D_{\rm KL}(\pi\|\nu),
\qquad
\pi_h(a\mid s)=\frac{\nu(a\mid s)e^{hA(s,a)}}{Z_h(s)}.
\]
The formula assumes a finite normalizer and support within $\nu$.
Gaussian weighted maximum likelihood fits the advantage-weighted
distribution, while finite sampling and clipped weights further
distinguish the implemented AWR stage from an exact distributional
update \citep{nair2021awac,kostrikov2022iql}. Later FR stages use the exact ambient distance in
Eqs.~\eqref{eq:fr_distance} and \eqref{eq:gaussian_bc}.

\section{Implementation and evaluation protocols}
\label{app:experiments}

\subsection{Deterministic algorithms and schedules}
\label{app:algorithms}
The horizon--depth implementation builds on TD3+BC and CORL
\citep{fujimoto2021minimalist,tarasov2022corl}.
It uses $T=\tau=\alpha/2$ and $h=T/K$.
The $K$ actors are independently initialized persistent networks with separate
Adam states. Each iteration updates the critic. Every second iteration,
actors take one optimizer step each on the same minibatch in chain order.
After actor one updates, its target actor and the target critics receive
soft updates with coefficient $0.005$; later actors use the updated
references in Eq.~\eqref{eq:actor_references}.
At actor update \(t\), the exact minibatch references are
\begin{equation}
\nu_{1,t}(s_i)=a_i,
\qquad
\nu_{k,t}(s_i)=\sg\mu_{k-1,t}^{+}(s_i),\quad k\ge2,
\label{eq:actor_references}
\end{equation}
where \(+\) denotes the freshly updated output and \(\sg\) stops gradients.
These time indices are suppressed in the main-text objectives.
E-MPI uses the target in Eq.~\eqref{eq:linearized_target} with
\(\widehat Q_t,\nu_{k,t},C_{k,t}^{\rm E}\) and minimizes
\begin{equation}
\mathcal L_{k,t}^{\rm E}(\mu_k\mid\nu_{k,t})
=\E_i\frac{\|\mu_k(s_i)-a_{k,t}^{\rm tar}(s_i)\|_2^2}{n_a}.
\label{eq:explicit_regression}
\end{equation}
Actor losses use $\widehat Q=Q_1$; Bellman targets use the minimum of the twin
target critics. E-MPI treats its projected regression targets as fixed.

Writing $\mu_{1,t}^{\rm pre}$ for actor one's output before its update,
the detached normalization scales are
\begin{equation}
\begin{aligned}
C_{1,t}^{\rm I}&=\sg\E_i|\widehat Q_t(s_i,\mu_{1,t}^{\rm pre}(s_i))|,\\
C_{1,t}^{\rm E}&=\sg\E_i|\widehat Q_t(s_i,a_i)|,\\
C_{k,t}^{r}&=\sg\E_i|\widehat Q_t(s_i,\nu_{k,t}(s_i))|,
\quad k\ge2,\quad r\in\{\mathrm I,\mathrm E\}.
\end{aligned}
\label{eq:qnorm}
\end{equation}
Each scale includes an additive $10^{-6}$ safeguard. E-MPI's first targets
use sampled dataset actions, as described in Appendix~\ref{app:sampled_explicit}.

We use a 256--256 ReLU actor MLP with tanh output and twin 256--256 ReLU
critic MLPs, batch size 256, Adam learning rate $3\times10^{-4}$,
discount $0.99$, target noise $0.2$, and noise clipping at $0.5$ times
the action bound. States are normalized by the dataset mean and standard
deviation plus $10^{-3}$. Timeout rows are omitted from transition
construction; terminal flags stop bootstrapping.

The nine datasets combine \texttt{halfcheetah}, \texttt{hopper}, and
\texttt{walker2d} with \texttt{medium-v2}, \texttt{medium-replay-v2},
and \texttt{expert-v2}. We use
\[
T\in\{0.05,0.1,0.2,0.4,0.7,1.5,2.5,4,7,10,12,14,17,20,24,28,34,40\}.
\]
Each deterministic sweep score averages ten evaluation episodes per seed.
E-MPI is reported through $T=20$.
In tables, I$K$ and E$K$ denote I-MPI-$K$ and E-MPI-$K$.

\subsection{Gaussian IQL implementation}
\label{app:iql_implementation}
In Eq.~\eqref{eq:iql_base}, \(Q\) is the minimum target critic,
without the detached normalization used by TD3+BC, and
\(\pi_k=\mathcal N(m_k,\operatorname{diag}(\sigma_k^2))\).
The likelihood-BC weight is \(1/h\), and the AWR exponent coefficient
is \(h\). In the reported AWR runs, the exponential weight in
Eq.~\eqref{eq:iql_base} is capped at \(100\):
\(w_h(s,a)=\min\{e^{hA(s,a)},100\}\).
Negative log-likelihood (NLL) sums action dimensions.
The two first-stage losses differ from their later proximal objectives;
Appendix~\ref{app:awr_connection} derives the local AWR--FR relation.

Gaussian \(Q\)+BC uses an unsquashed mean and learned state-dependent
standard deviation, initialized to one. AWR uses a tanh mean and
learned state-dependent standard deviation. Both subsequent refinement
losses evaluate expected \(Q\) with reparameterized Gaussian samples;
the implementation uses eight antithetic samples by default
and stops gradients through the updated predecessor. The current
implementation also updates an independent full-\(T\) control when
\(K>1\), so actor-update cost need not equal that of the TD3+BC chain.
Geometry is computed
between the underlying Gaussians. Mean-action simulator evaluation clips
actions to the environment bounds.

Equal \(h\) does not calibrate action displacement or critic scales:
Gaussian distances and negative log-likelihood sum over action
coordinates, whereas the deterministic squared penalty averages them.

\subsection{Action and target-value diagnostics}
\paragraph{Distances to dataset actions and Bellman targets.}
On \(m\) fixed nonterminal validation transitions, we measure RMS distance
from the observed successor action \(a_{i+1}\):
\begin{equation}
D(\pi)=\left(\frac{1}{m n_a}\sum_{i=1}^{m}
\bigl\|\pi(s_i')-a_{i+1}\bigr\|_2^2\right)^{1/2}.
\label{eq:matched_rms_distance}
\end{equation}
The first target actor \(\mu_1^-\) and endpoint \(\mu_K\) are
evaluated on the same next states.
The squared distance includes conditional action variance.

To measure the corresponding value-space perturbation, we
use a matched TD3+BC target critic
\(F(s',a)=\min_j Q_j^-(s',a)\), held common across methods:
\begin{equation}
\Delta_y^{\rm RMS}
=\gamma\Bigl(\E\bigl[
\bigl(F(s',\widetilde\mu_1^-(s'))
-F(s',\widetilde a_{\mathcal D}')\bigr)^2
\bigr]\Bigr)^{1/2}.
\label{eq:target_value_diagnostic}
\end{equation}
The paired actions receive identical smoothing noise and
box clipping. Their RMS value difference measures bootstrap-target
perturbation under the shared learned critic.

\paragraph{Matched comparison.}
The matched comparison uses nine medium, medium-replay, and expert tasks,
$T\in\{4,7,10,14,20\}$, and two seeds.
Target and endpoint actions are evaluated on identical next states and paired
successor dataset actions. Rows with a timeout or true terminal flag are
excluded. The target-noise diagnostic uses 4096 next-state transitions and 64
draws of clipped smoothing noise per checkpoint. The value-space comparison
in Eq.~\eqref{eq:target_value_diagnostic} uses the same-$T$ TD3+BC target
critic and identical smoothing noise for both compared actions.

\subsection{Aggregation and uncertainty}
\label{app:uncertainty}
The region means in Table~\ref{tab:regions}
average four seeds, then tasks and sampled horizons with equal weight.
The means refer to the measured grid. A low-return cell has a four-seed
task--horizon mean below 20; Table~\ref{tab:collapse_sensitivity}
varies this threshold over the large-horizon range.

For 95\% percentile intervals, we average the specified seeds and horizons
within each task and resample the nine task-level contrasts 100,000 times.
Low-return frequencies are also aggregated within task before resampling.
These intervals retain the within-task seed and horizon structure; the
nine datasets share only three dynamics families.
The first/endpoint comparison resamples nine complete tasks 100,000 times,
retaining within-run actor pairs, horizons, and
training seeds. Evaluation episodes are averaged within each run.

\subsection{Gaussian evaluation and coverage}
\label{app:iql_protocol}
\paragraph{Coverage.}
The Gaussian curves report final-actor mean-action scores.
Gaussian \(Q\)+BC--Wasserstein covers the same nine datasets.
AWR--FR covers seven datasets on an
incomplete horizon--depth grid; Figure~\ref{fig:iql_fr_main}
shows the observed cells for Hopper medium-replay,
Hopper expert, and HalfCheetah expert.

\paragraph{Comparison and aggregation.}
The \(K=1\) curves use separately trained full-\(T\) actors;
the first actor of a deeper chain has coefficient \(T/K\).
Bands show one sample standard deviation. The main-text curves include
only complete configurations; missing configurations
are left blank. The nine-task \(Q\)+BC--Wasserstein mean
weights tasks equally at each shared horizon.

\section{Deterministic task results and controls}
\label{app:results}

\subsection{Task-level horizon responses}
\label{app:task_results}
Figure~\ref{fig:env_curves_prox} resolves the mean trends of Figure~\ref{fig:stability}
by task. Across R2 and R3, I-MPI-4 exceeds TD3+BC on eight of nine
task means. Its mean gain over I-MPI-3 is $\LocoHighGain$ points
($\LocoHighCI$). Cells whose four-seed mean rises from below 20
with I-MPI-3 to at least 20 with I-MPI-4 contribute
\LocoRescueShare{} of the aggregate signed score gain.
Table~\ref{tab:collapse_sensitivity}
shows the response to other low-return thresholds.
Figure~\ref{fig:env_curves_lin} gives E-MPI's task-level responses
through $T=20$.

\begin{table}[t]
\caption{Collapse-rate sensitivity across R2 and R3 ($1.5<T\le40$).
Entries are percentages of task--horizon cells whose four-seed mean
falls below each score threshold.}
\centering
\small
\begin{tabular}{r|rrrr}
\toprule
Threshold & TD3+BC & I2 & I3 & I4\\
\midrule
\LocoCollapseRows
\bottomrule
\end{tabular}
\label{tab:collapse_sensitivity}
\end{table}

\begin{figure*}[t]
\centering
\includegraphics[width=\textwidth]{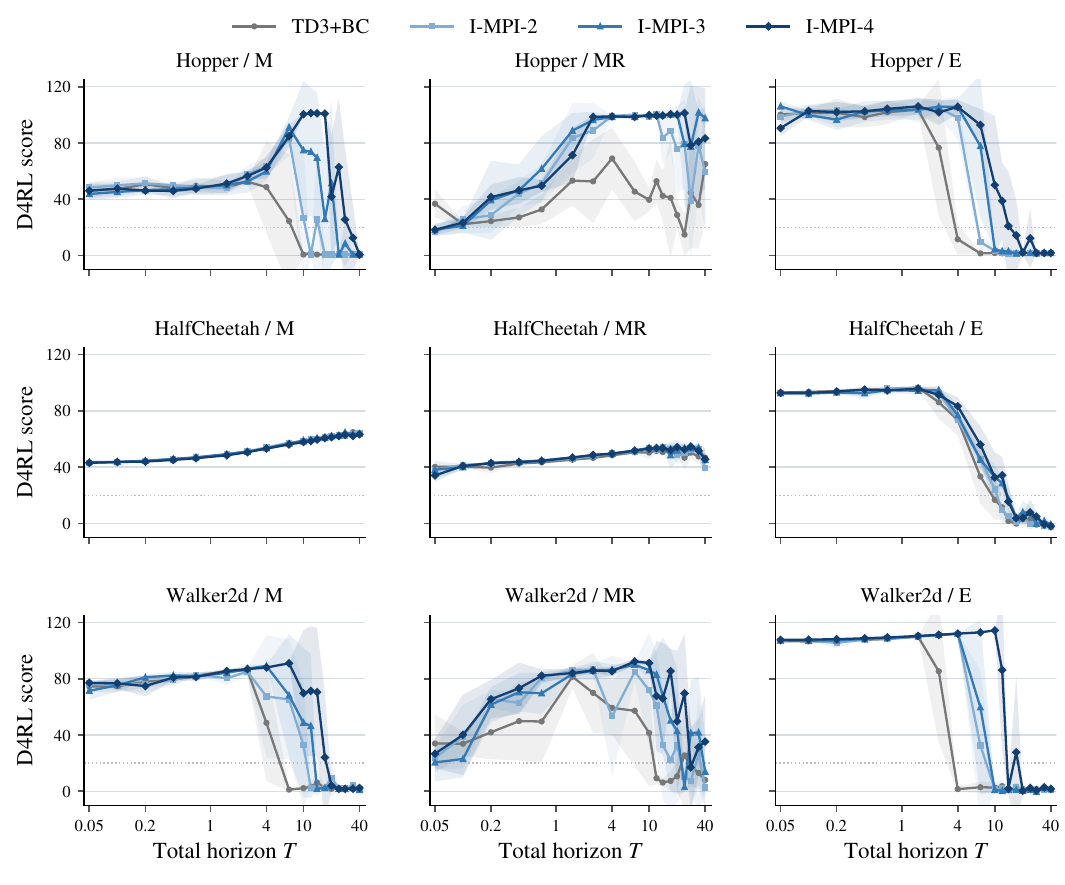}
\caption{I-MPI task-level sweeps. Bands show one sample standard
deviation across seeds; the dotted line marks score $20$.}
\label{fig:env_curves_prox}
\end{figure*}

\begin{figure*}[t]
\centering
\includegraphics[width=\textwidth]{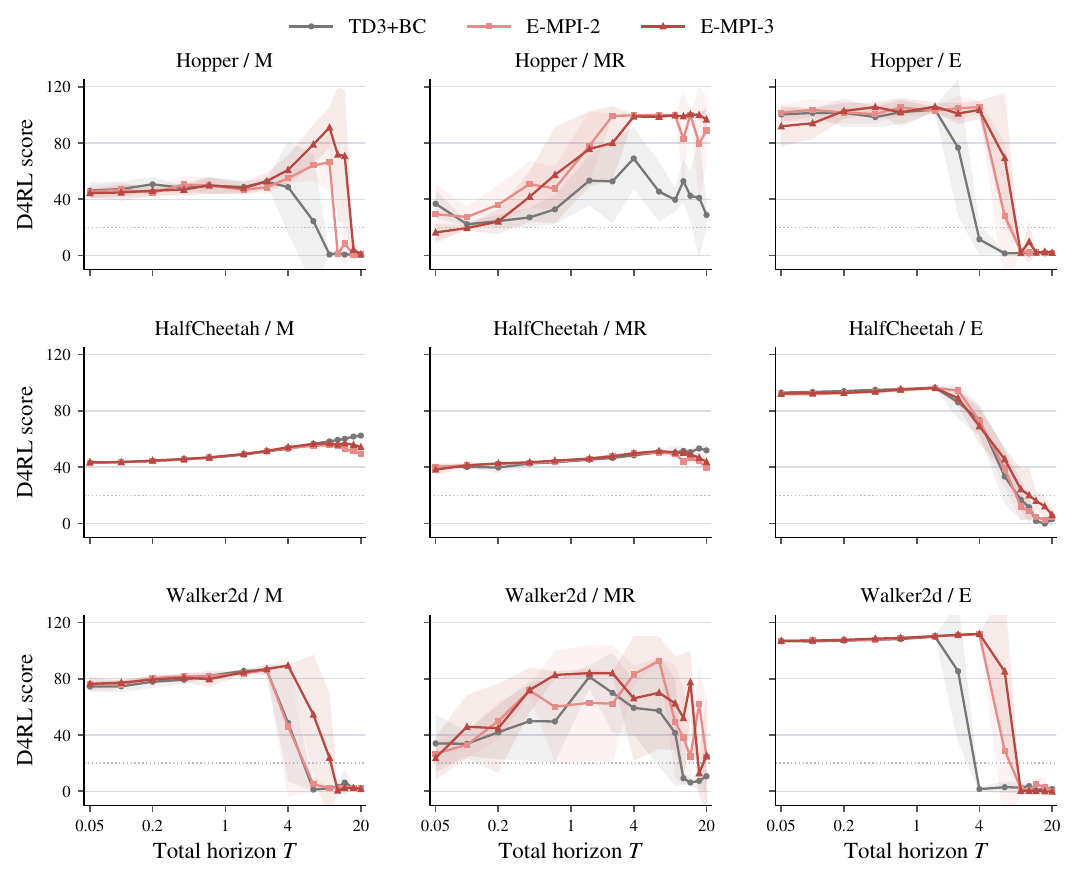}
\caption{E-MPI task-level sweeps through $T=20$. Bands show one sample
standard deviation across seeds; TD3+BC uses the same grid.}
\label{fig:env_curves_lin}
\end{figure*}

\paragraph{Held-out family horizon selection.}
For a retrospective horizon-selection check, we choose each method's
highest-mean horizon on two dynamics families and evaluate it on the
third. The held-out family scores average $78.4$ for I-MPI-4 and
$74.8$ for TD3+BC. I-MPI-4 minus TD3+BC is $+17.1$ on Hopper,
$-8.9$ on HalfCheetah, and $+2.8$ on Walker2d.

\subsection{First actor versus endpoint}
\label{app:extraction}
Figure~\ref{fig:first_endpoint} compares I-MPI-4's first actor and
endpoint at the same checkpoints. Across all fourteen observed horizons
through $T=20$, the endpoint gains $5.76$ points on average.
Within R2 ($T\in\{2.5,4,7,10\}$), its mean rises from $70.43$
to $79.56$, a gain of $9.13$ points with a 95\% task-bootstrap
interval of $[-5.4,23.6]$.
Within the observed part of R3 ($T\in\{12,14,17,20\}$), the
means are $52.42$ and $52.19$, a difference of $-0.23$ points
with interval $[-16.8,15.0]$.

\begin{figure}[H]
\centering
\includegraphics[width=.96\textwidth]{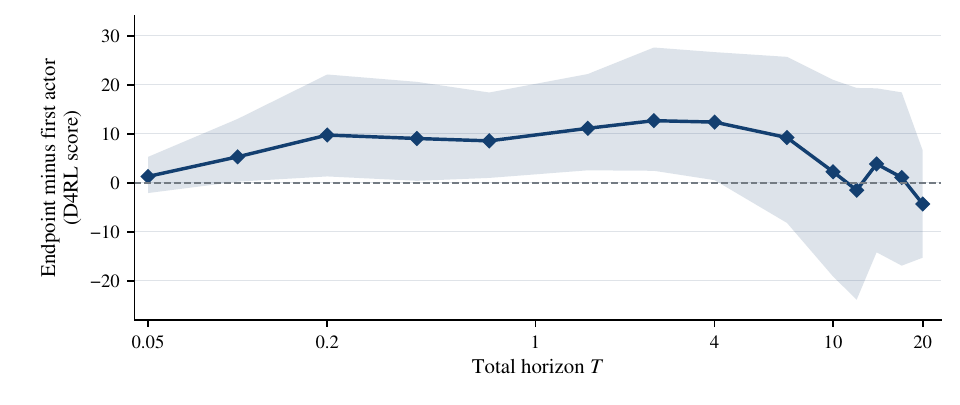}
\caption{I-MPI-4 endpoint minus first-actor return at sampled horizons.
The band is a 95\% task-bootstrap interval retaining paired runs
within each task.}
\label{fig:first_endpoint}
\end{figure}

\subsection{Fixed-reference control for re-centering}
\label{app:recenter_k4}
The first-to-endpoint comparison combines additional actor stages
with a changing reference. We therefore compare two $K=4$ branches
with the same actor-update count and local coefficient $h=T/4$ at
$T\in\{4,7,10\}$ on the nine tasks. In the fixed-reference branch,
actors $\pi_2,\pi_3,\pi_4$ use $\pi_1$ as their reference; in the
re-centered branch, each uses its predecessor.
Figure~\ref{fig:recenter_k4}
shows a mean re-centered-minus-fixed difference of $-0.73$ points,
with a task-bootstrap interval of $[-10.6,8.1]$.
The task means range from $+24.9$ on Hopper-medium to $-32.8$
on HalfCheetah-expert, showing that the return effect depends
strongly on the environment.
In a stagewise diagnostic on two HalfCheetah-expert seeds, the
matched branches share the critic and $\pi_1$. The training
references differ beginning at $\pi_3$. At $T=7$ and $10$,
the fixed-reference branch recovers after low returns at $\pi_2$,
while re-centering continues to reduce return.

\begin{figure}[H]
\centering
\includegraphics[width=.97\textwidth]{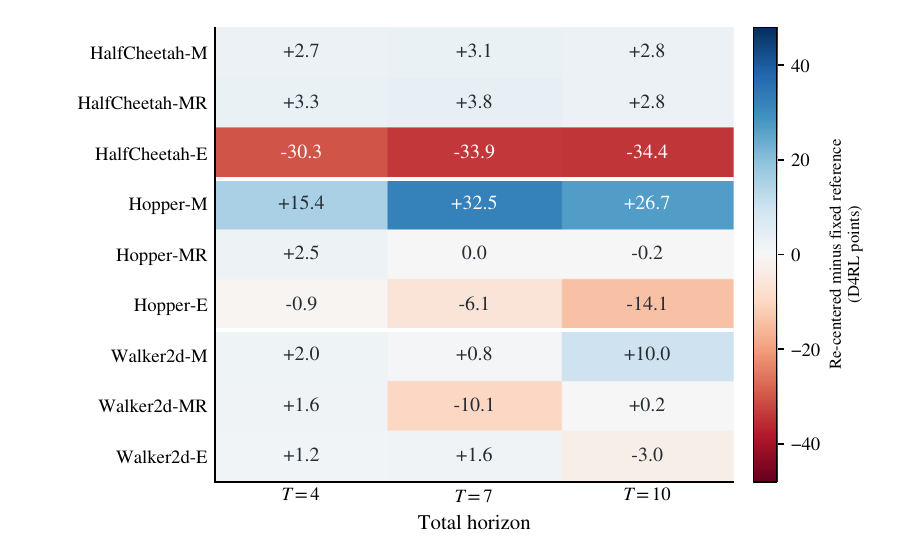}
\caption{Paired I-MPI-4 endpoint differences: re-centered reference
minus fixed reference. M, MR, and E denote medium, medium-replay, and expert.
Positive values favor the re-centered endpoint.}
\label{fig:recenter_k4}
\end{figure}

\subsection{Task-level target and endpoint diagnostics}
\label{app:target_movement}

Figure~\ref{fig:matched_exposure_distribution} shows the
task--horizon--seed ratios underlying the medians in
Figure~\ref{fig:matched_exposure} of Section~\ref{sec:td3bc_diagnostics}.
Across I-MPI-3 and I-MPI-4,
the first target actor has smaller action displacement and
target-value perturbation than TD3+BC on most matched configurations.
For I-MPI-4, the median reduction in target-value perturbation under
the shared TD3+BC critic is $38\%$.
The final actor lies farther from observed successor actions than
its first target actor: the median final-to-target RMS ratios are
$1.297$ for I-MPI-3 and $1.365$ for I-MPI-4.

\begin{figure}[ht]
\centering
\includegraphics[width=\linewidth]{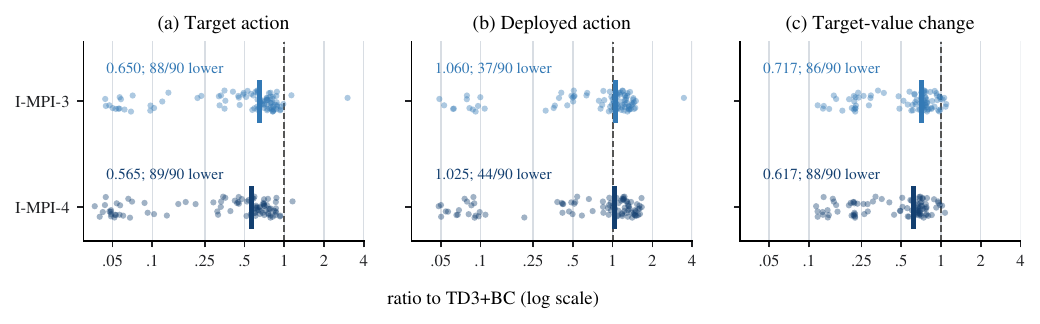}
\caption{Paired task--horizon--seed ratios to TD3+BC at the same total horizon.
Action panels use squared RMS displacement from the observed next dataset
action; the value panel uses the RMS Bellman-target perturbation under the
common TD3+BC critic and identical smoothing noise. Dots are individual
task--horizon--seed cells, vertical bars mark medians, and labels give the
median and number of ratios below one. The logarithmic axis retains the
spread across runs. Ratios include the conditional-action-variance floor.}
\label{fig:matched_exposure_distribution}
\end{figure}

\clearpage
\section{Gaussian \texorpdfstring{\(Q\)+BC--\(W_2\)}{Q+BC--W2} results by task}
\label{app:iql_results}

Figure~\ref{fig:iql_w2_all} shows all nine tasks at the ten horizons
observed for every depth $K=1,\ldots,4$. The selection protocol is
given in Appendix~\ref{app:iql_protocol}.

\begin{figure}[H]
\centering
\includegraphics[width=\textwidth]{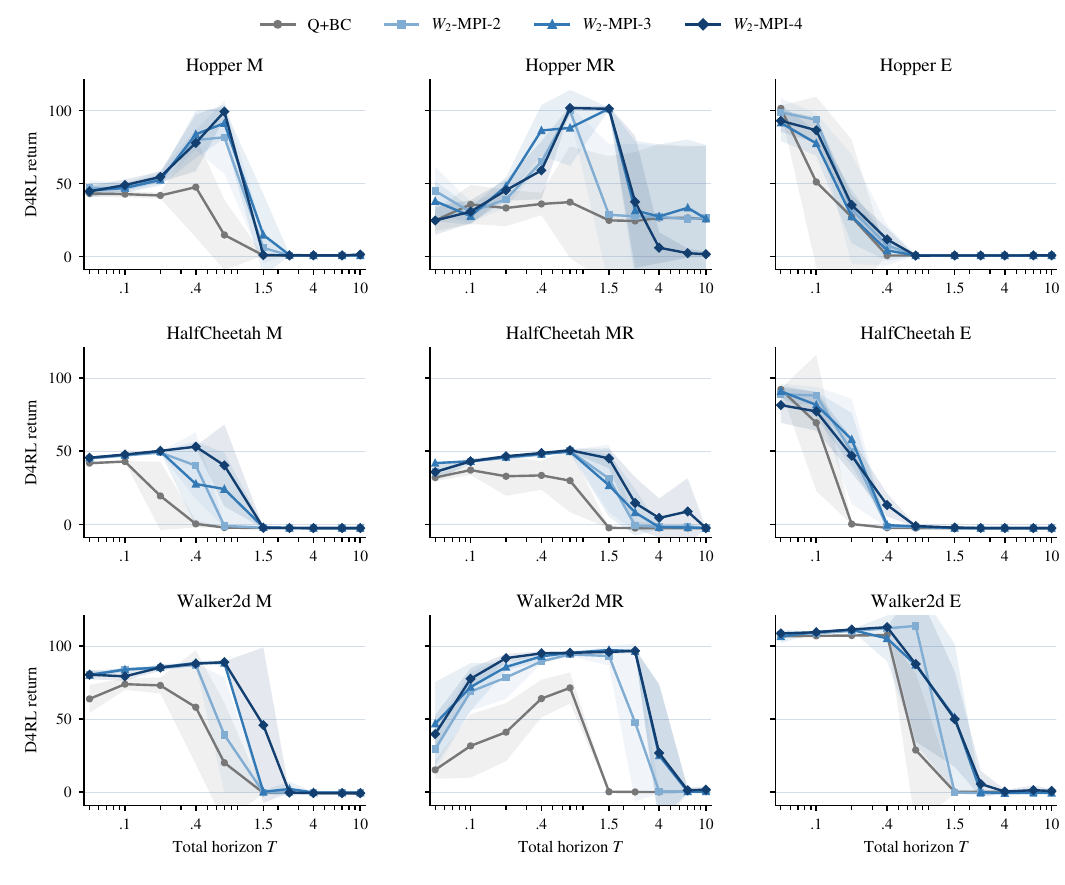}
\caption{Gaussian \(Q\)+BC--Wasserstein, \(K=1\)--\(4\).
Lines connect observed points; bands show one sample standard deviation
across seeds. M/MR/E denote medium/medium-replay/expert.}
\label{fig:iql_w2_all}
\end{figure}

\end{document}